\newtheorem{theorem}{Theorem}
\begin{document}
%
\title{OCTOPUS: Overcoming Performance and Privatization Bottlenecks in Distributed Learning}
%
%
%
%
\author{Shuo~Wang,~\IEEEmembership{Member,~IEEE,}
		Surya~Nepal,~\IEEEmembership{Member,~IEEE,}
		Kristen~Moore,~\IEEEmembership{Member,~IEEE,}
		Marthie~Grobler,~\IEEEmembership{Member,~IEEE,}
		Carsten~Rudolph,~\IEEEmembership{Member,~IEEE,}
		Alsharif~Abuadbba,~\IEEEmembership{Member,~IEEE,}
\IEEEcompsocitemizethanks{\IEEEcompsocthanksitem Shuo Wang is with the CSIRO's Data61 and Cybersecurity CRC, Australia. 
E-mail: shuo.wang@csiro.au
\IEEEcompsocthanksitem Surya Nepal, Kristen Moore, Marthie Grobler and Alsharif Abuadbba are with CSIRO's Data61 and Cybersecurity CRC.\protect\\
E-mail: \{Surya.Nepal, Kristen.Moore, Marthie.Grobler and Alsharif.Abuadbba\}@data61.csiro.au.
\IEEEcompsocthanksitem Carsten Rudolph is with Faculty of Information Technology at Monash University, Melbourne, Australia.
}
}
%
%

\markboth{IEEE Transactions on XXX, April 2021}%
{Wang \MakeLowercase{\textit{et al.}}: OCTOPUS: Overcoming Performance and Privatization Bottlenecks in Distributed Learning}
%



\IEEEtitleabstractindextext{%
\begin{abstract}
The diversity and quantity of data warehouses, gathering data from distributed devices such as mobile devices, can enhance the success and robustness of machine learning algorithms. Federated learning enables distributed participants to collaboratively learn a commonly-shared model while holding data locally. However, it is also faced with expensive communication and limitations due to the heterogeneity of distributed data sources and lack of access to global data. 
In this paper, we investigate a practical distributed learning scenario where multiple downstream tasks (e.g., classifiers) could be efficiently learned from dynamically-updated and non-iid distributed data sources while providing local data privatization. We introduce a new distributed/collaborative learning scheme to address communication overhead via latent compression, leveraging global data while providing privatization of local data without additional cost due to encryption or perturbation. 
This scheme divides learning into (1) informative feature encoding, and transmitting the latent representation of local data to address communication overhead; (2) downstream tasks centralized at the server using the encoded codes gathered from each node to address computing overhead. 
Besides, a disentanglement strategy is applied to address the privatization of sensitive components of local data. 
Extensive experiments are conducted on image and speech datasets. The results demonstrate that downstream tasks on the compact latent representations with the privatization of local data can achieve comparable accuracy to centralized learning.
\end{abstract}
\begin{IEEEkeywords}
Distributed Learning, Data Collection, Representation Learning, Disentanglement, Privatization
\end{IEEEkeywords}
}

\maketitle
\IEEEdisplaynontitleabstractindextext
%
\IEEEpeerreviewmaketitle
\IEEEraisesectionheading{\section{Introduction}\label{sec:introduction}}
The success of machine learning approaches relies on the availability of large datasets \cite{vaswani2019fast}. Distributed data sources may provide one feasible method of collecting large amounts of data. With the prevalence of the Internet of Things (IoT) and 5G networks, a large number of modern distributed devices are incorporated, resulting in a wealth of data generated in real-time. One of the common data types is high-dimensional data, e.g., high-fidelity images, such as in video surveillance. In addition, data derived from distributed devices usually contain sensitive information that can be used for re-identification. The transmission of private information from distributed devices could pose a threat to data security and privacy. Consequently, it is not feasible to collect raw data directly from distributed devices to centralized servers for analysis or model training. 

Federated learning has emerged as a training paradigm that enables distributed participants to collaboratively learn a commonly-shared model while retaining data locally, without the need to exchange the data. Federated learning enables the training procedures to be performed in distributed (collaborative) learning settings from a group of distributed data sources \cite{vogels2019powersgd,konevcny2016federateda,konevcny2016federatedb,qu2021proof}. 
Federated learning approaches, however, are constrained by performance and privatization issues. 
\newline
\textcolor{black}{
\textbf{Performance of communication.} To retain consensus across the network, distributed nodes have to frequently exchange volume gradient updates of the sizeable learned model \cite{stich2018local,khaled2020tighter,basu2020qsparse} with the server, resulting in expensive communication and scalability constraints  \cite{lin2017deep,zhang2017zipml,zhou2021communication}, especially when devices are typically communication-constrained with limited bandwidth, e.g., IoT devices. 
\newline
\textbf{Performance of computing and storage}. The computation and storage capacity of distributed nodes and servers may not be sufficient to train a complicated downstream model, especially when devices are typically resource-constrained. Besides, multiple downstream tasks need to federated learn multiple commonly shared complicated models \cite{mills2021multi} from scratch every time.
\newline
\textbf{Performance of accuracy: Heterogeneity of local data bias and data variation with time.} The federated models' accuracy may be degraded by the heterogeneity of local data bias and data variation with time. Due to the lack of global data, there is spatial heterogeneity such that each node may have a data value bias and data size variation with respect to the general population, and temporal heterogeneity that the distribution of each local dataset may vary with time. 
\newline
\textbf{Privatization Control.} Communicating model updates or collected data may contain personally identifiable information (PII) from the local data, requiring additional privacy-preserving strategies. The server can be also considered trusted or not. The lack of access to global training data can also lead to other security concerns, such as data poisoning, backdoor attacks, or unwanted biases entering the training, e.g., age, gender, sexual orientation.
}

\textcolor{black}{
Consequently, it is important to investigate a practical distributed learning scenario where multiple downstream tasks (e.g., classifiers) could be efficiently learned from dynamically updated and non-iid distributed data sources, while providing local privatization. However, existing federated learning schemes failed in this scenario due to the bottlenecks mentioned above.
}

\textcolor{black}{
To reduce the communication burden and take advantage of global data while addressing privatization in one task, we propose a novel distributed learning scheme inspired by the octopus.
Octopuses have a brain that serves as a nerve center but contains only 40 percent of their brain cells. The remaining 60 percent is found in its eight arms, which constitute eight "mini-brains".
The octopus collects environmental information through its suckers and receptors on each arm. The arms can carry out simple learning and act independently, sending abstract information to the brain for higher-level decision-making. Due to this solution, the octopus has extremely efficient cognitive capabilities beyond the average animal's. 
Similarly, our OCTOPUS learning scheme distributes feature extraction and encoding at edge nodes while gathering encoded codes and learning downstream models at the server. 
Compared to traditional federated learning and centralized learning, OCTOPUS addresses bottlenecks towards a practical distributed learning approach with multiple downstream tasks and dynamically updated and non-IID distributed data sources. It has the following advantages: 
}

\textcolor{black}{
(1) The OCTOPUS reduces communication overhead without affecting the quality of the representation. It uses a Distributed Vector Quantized Autoencoder (DVQ-AE) to enable distributed encoding at the edge nodes to extract expressive and compressed representation features transmitted as a set of low-dimensional discrete indices. 
}

\textcolor{black}{
(2) The OCTOPUS maintains accuracy performance against data heterogeneity by addressing the lack of global data. It applies Group and Sliced Vector Quantization and Flexible and Stabilized Training to handle the heterogeneity of local data bias and data variation over time and continuously update the individual DVQ-AE for each distributed participant.
}

\textcolor{black}{
(3) The OCTOPUS balances the computation and storage performance via the training of distributed DVQ-AE between the nodes and the server, and places the training of downstream models, e.g., classifiers, using latent codes at the cloud only, avoiding the complexity of the models and the constraints of the devices at the node.
}

\textcolor{black}{
(4) To address the privatization of local data, the disentanglement strategy is incorporated in OCTOPUS to separate public components from private ones automatically. 
Each distributed participant can only release the user-specific public latent codes to the server. The downstream tasks could be conducted on the gathered public components with minimal accuracy degradation compared with using the entire raw data. 
}

We conduct extensive experiments on image and speech data. The results demonstrate our approach's comparable accuracy and privatization with a smaller transmission and local privatization burden compared to ordinary federated learning and centralized learning. 

\section{OCTOPUS Learning Framework}
\subsection{Problem Formulation}
This work aims to develop a practical distributed learning scenario in which a variety of downstream tasks can be learned from dynamically-updated and non-iid distributed data sources. Intuitively, we hope to learn a global feature dictionary, and distributed local encoder to map a high-dimensional sample to the dictionary indexes, then transmit the index matrix and learn multiple downstream tasks on the gathered compressed features. The global dictionary is expected to be updated easily to address local data bias and dynamic updates. Further, plug-in privatization of local data could be achieved by forcing the global dictionary to carry only those features that are not sensitive.

There are many practical scenarios, for example, speech-to-text recognition. 
Speech consists of content (phonemes) and style (speaker identification).
It has demonstrated that the global dictionary learned for speech can be highly related to phonemes with little speaker identification information. 
Hence, the same sentence spoken by different speakers would be projected to a similar dictionary item. 
As a result, the local encoder could map speech to the index of the dictionary to transmit instead of the raw speech. Furthermore, the dynamic dictionary update could consider the spatial and temporal bias associated with local data.
Multiple downstream tasks could be conducted on the gathered compressed features via a simple model with less computation than raw data.

Formally, we consider the problem setting where a stream of gathered samples $x \sim D_i, ~ x \in \mathbb{R}^n$ is continuously and independently collected from $M$ different distributed devices $D_i,~ i = 1 , \cdots , M$, to a server for model training. 
The variety and quantity of the training data collected from distributed devices at a high frequency determine the robustness of the final model. Using this technology, it is expected to be possible to collect the distributed information without having to incur large communication overheads and without being concerned with the possibility that sensitive information $x^s$ could be derived from the transmitted data $\hat{x}$. Further, the compressed representations that reveal most of the important features of the data could address the computational and storage concerns. 

\textcolor{black}{
Consequently, we propose the OCTOPUS distributed learning scheme, which is designed to possess the following properties: 
}

\textcolor{black}{
\textbf{(a) Encoding}: transforms raw data to low dimensional latent space compressed representation features (latent codes for short) for transmission and downstream tasks, $E: X \rightarrow Z$ and $Z \in \mathbb{R}^k, ~k \ll n$, to address the communication constraints. This encoding is characterized by the conditional probability distribution $P_{Z|X}$; 
}

\textcolor{black}{
\textbf{(b) Disentanglement}:
the encoder is incorporated with an additional disentanglement strategy to facilitate the decomposition of the latent representation, $Z$, into a public component $Z_{ \bullet }$ irrelevant to PII, and a private component $Z_{\circ}$ relevant to PII, 
i.e. $ DT(Z)=Z_{\circ} + Z_{ \bullet }$, to address privatization on local data.
Specifically, we isolate sensitive and non-sensitive attributes into separate subspaces while ensuring that the latent space factorizes these subspaces independently.
}

\textcolor{black}{
\textbf{(c) Flexible and Stabilized Training}: all collected latent codes $Z$ are stored and used for training of downstream models $T_i$, e.g. classifier, at the cloud. 
The compressed version of global data could address the computing and storage overheads while taking advantage of the global data. 
Training the encoding mechanism is expected to require a minimal allocation of communication and computation resources for each distributed node while permitting continuous updates. 
}

\subsection{Overview of OCTOPUS}

\textcolor{black}{
 As shown in Figure 1, the workflow of the OCTOPUS is: 
 \newline
 \textbf{Step 1.} Learn an initial global DVQ-AE on a relevant public-available dataset at the server. This involves learning a global feature dictionary (which is composed of K items with dimension M), encoding an input into a H*W feature vector with dimension M, and finding the index of the most similar dictionary item for each feature vector, resulting in H*W index matrix as the compressed features. 
  \newline
\textbf{Step 2.} One-shot locally fine-tuning with distributed encoders and a joint decoder for the local DVQ-AE at each participant in a distributed manner based on the initial global DVQ-AE and the global dictionary. 
 \newline
\textbf{Step 3.} Based on the disentanglement strategy of the local fine-tuned DVQ-AE, each distributed participant can release only the latent codes as public components to the server, without concerns of releasing private information or incurring the additional cost of encryption or perturbation. 
 \newline
\textbf{Step 4.} Local encoder transmits low-dimensional compressed latent features of collected samples at a higher frequency. 
 \newline
\textbf{Step 5.} When new data comes in, the distributed autoencoder will be continuously updated by updating the local codebook via a simple exponential moving average, instead of retraining the local encoder and decoder, and then sent to the global dictionary at a lower frequency.
 \newline
\textbf{Step 6.} A variety of downstream tasks could be learned from the gathered compressed features via simple models at the server. The simple inference model could be returned to the node for real-time inference if required. 
}

\textcolor{black}{
OCTOPUS consists of the basic Distributed Vector Quantized Autoencoder (for general communication, detailed in Section 2.3) with three performance enhancement strategies: Group and Sliced Vector Quantization (for accuracy performance, detailed in Section 2.4), Disentanglement (for privatization performance, detailed in Section 2.5), and Flexible and Stabilized Training (for computing and storage performance and continuously updating, detailed in Section 2.6) strategies. 
The basic Distributed Vector Quantized Autoencoder consists of an encoder $E$, a vector quantization operation $VQ$, and a decoder $D$. $E$ maps the sample into latent representation. $VQ$ finds the nearest embedding from a learned codebook for each latent representation vector to generate a discrete latent code using the embedding index. $D$ decodes latent embedding back to input space. 
Data collected from each participant will be processed locally to reduce the dimension using the fine-tuned encoder to take advantage of the data locality at each node. 
The most critical features in the data are extracted via the local encoder, and low dimensional latent codes reduce the dimension of the input data. 
We learn and transmit the expressive encoded latent codes of the data associated with local privatization through the disentanglement strategy during the client stage. During the server stage, various downstream tasks, e.g., classifications, are performed on the collected latent codes, alleviating computing and storage overheads. 
If required, the privacy-preserving representation of the raw data can be reconstructed at the server using the gathered public components and replaced with private components via the fine-tuned decoder.
}
\begin{figure}[!htb]
	\centering
	\setlength{\abovecaptionskip}{-0.05cm}
	\setlength{\belowcaptionskip}{-0.2cm}
	\includegraphics[width=3.5in,height=2.6in]{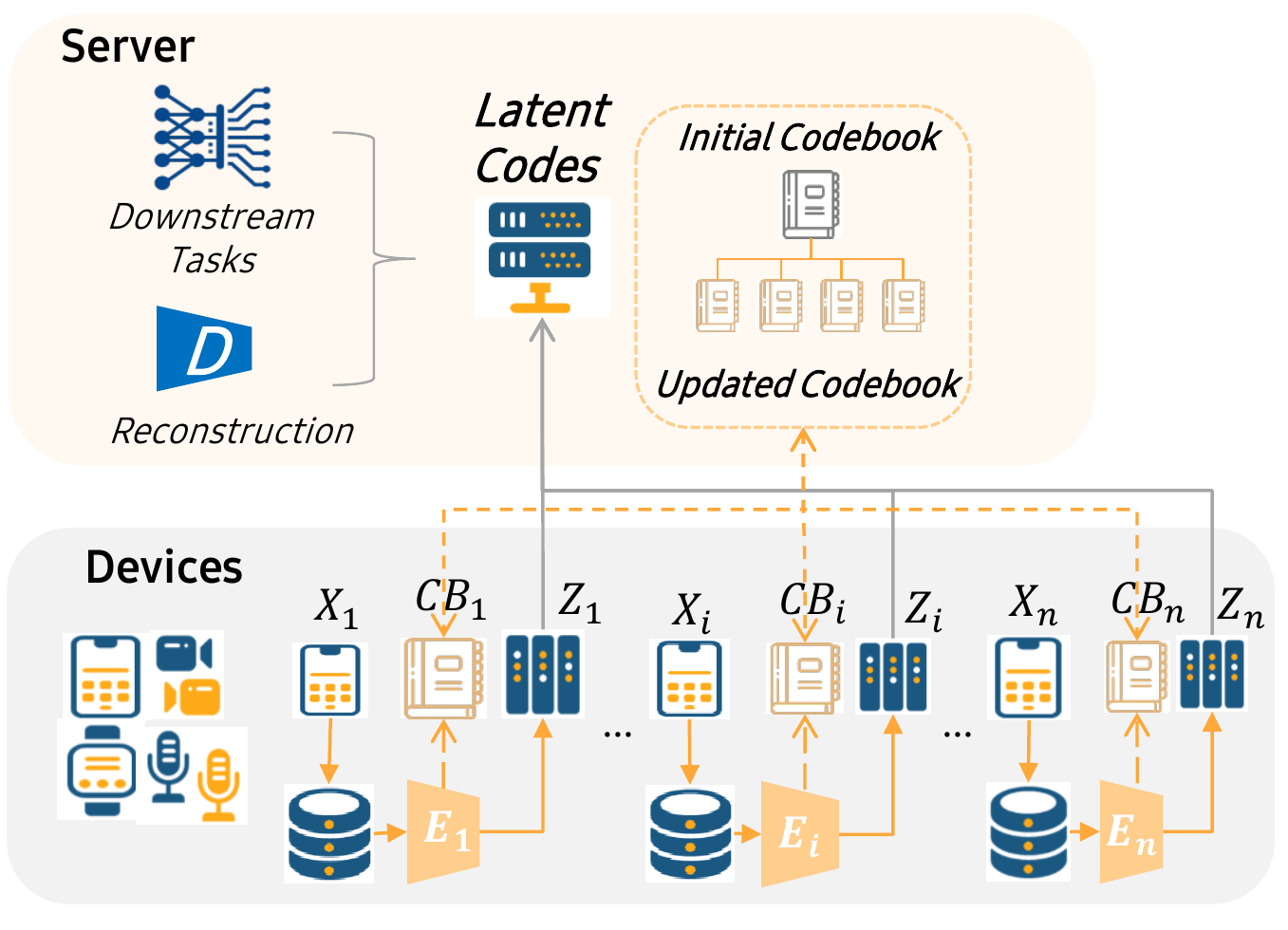}
	\caption{The OCTOPUS learning framework.}
\end{figure}
\vspace{-3mm}
\subsection{Basic Vector Quantized Procedure}
\textcolor{black}{
The basic encoding function is introduced in this section. 
Generally, the VQ-based autoencoder consists of three components, encoder $E$, decoder $D$, and codebook $C$. 
For the DVQ-AE, the encoder maps the sample $x$ into latent representation $E(x) = Z_e(x) \in \mathbb{R}^{H \times W \times M}$, where $H\times W$ is the number of the embedded feature vectors, and $M$ is the dimension of the feature vector. 
To further restrict the embedding space and compress the $Z_e(x)$ to a more low-dimensional latent representation, we apply the vector quantization operation to transfer the $Z_e(x) \in \mathbb{R}^{H \times W \times M} $ to $Z(x) \in \mathbb{R}^{H \times W}$ using a codebook $C$. 
}

\textcolor{black}{
In particular, the codebook $C$ is regarded as the common feature dictionary that is shared between encoder and decoder to restrict the embedding space, consisting of K items of M-dimensional embedding vector (called atoms), i.e., $e \in \mathbb{R}^{K\times M}$. 
The vector quantization operation finds the nearest embedding $e_i \in \mathbb{R}^M$ from the learned codebook for each $M$-dimensional vector of $z_e(x) \in Z_e(x)$, using the index $[i]$ of $e_i$ to construct the discretized latent codes $z(x) \in [K]^{H \times W}$. 
Finally, the decoder $D$ retrieves the index matrix $z(x)$ back to corresponding embeddings $z_q(x) \in \mathbb{R}^{H \times W \times M}$ according to the codebook, and decodes $z_q(x)$ back to input pixel space. 
Parameters of the encoder, decoder, and codebook are the trainable parameters. And the learning objective of DVQ-AE is given as: 
\begin{equation}
\scriptsize
{L = \left \| x-D(z_q(x)) \right \|^2+ \alpha \left \| sg[z_e(x)] - e] \right \|^2+ \beta \left \| z_e(x) - sg[e] \right \|^2}
\end{equation}
}
The first part of the loss function is a reconstruction loss $-log p(x|z_q(x))$, which represents the negative log-likelihood between the outputs of the encoder and the decoder after quantization respectively. 
The second part is codebook loss that determines VQ embedding vectors $e$ to figure out encoder results $ z_e(x)$. 
The third term is the commitment term, which makes the encoding commit to a VQ embedding vector $e$ and constrains how the VQ space is used. Here, $sg[ \cdot]$ is the stop gradient operator, maintaining its argument during the forward pass and returning zero gradients during the backward pass. The gradient of this non-differentiable step is approximated using the straight-through estimator. 
\begin{figure}[!htb]
	\centering
	\setlength{\abovecaptionskip}{-0.05cm}
	\setlength{\belowcaptionskip}{-0.1cm}
	\includegraphics[width=3.6in,height=2.6in]{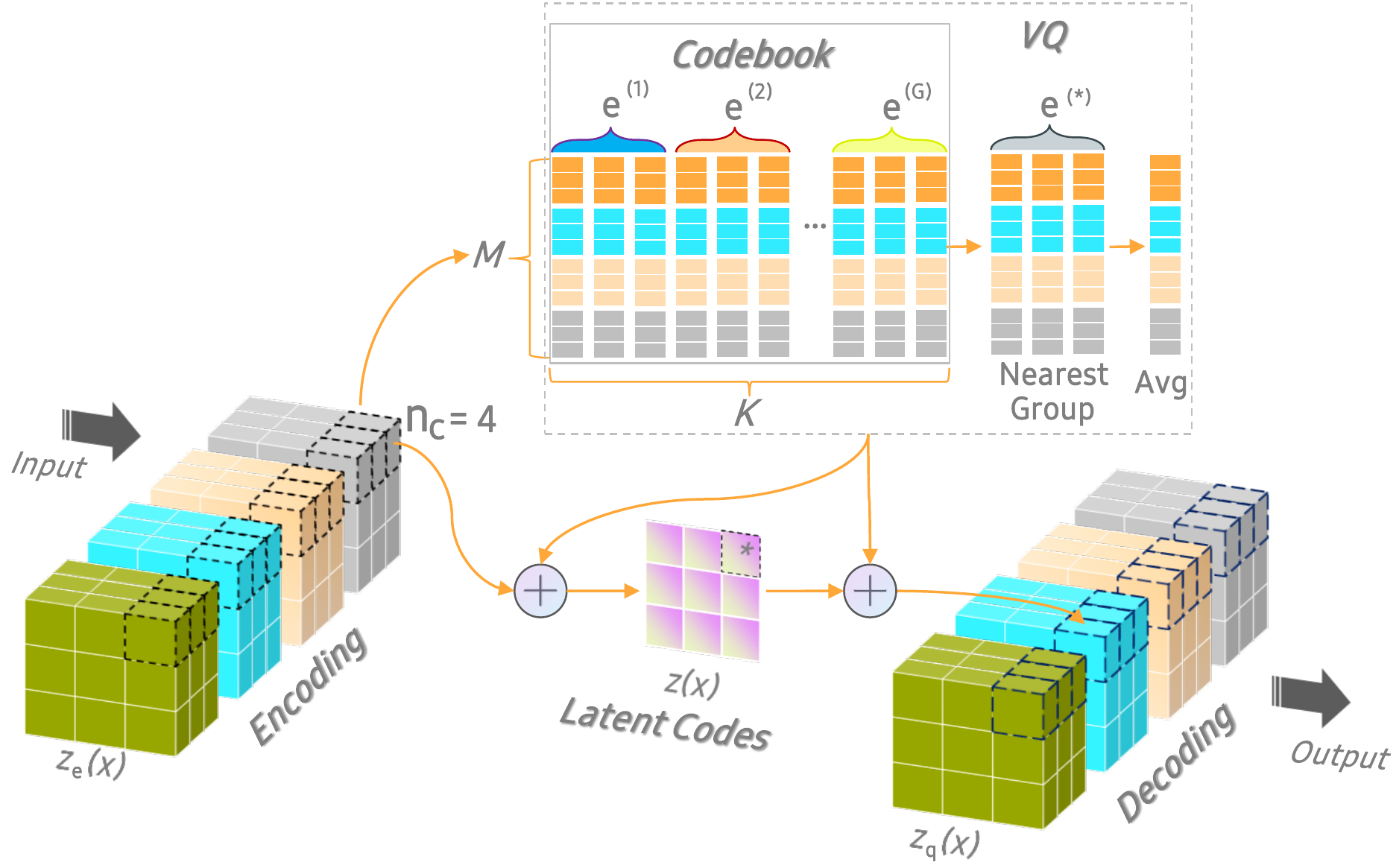}
	\caption{The GSVQ framework.}
\end{figure}
\subsection{Group and Sliced Vector Quantization (GSVQ)}
As the first enhancement of DVQ-AE, this strategy is used to increase the accuracy performance affected by local data's spatial and temporal heterogeneity. 
For general VQ-VAE, no constraints are placed on the atom distribution in the embedding dictionary, enabling atoms adjacent to one another to represent entirely different features. To mitigate the distortions caused by the mismatch of atoms, a Group Vector Quantization (GVQ) is adopted by GSVQ to decrease the probability of quantizing the encoder's output to a mismatched atom, similar to group sparse coding algorithms \cite{szabo2011online}. GVQ aims to bring similar atoms closer to each other, and different atoms further away. 
GVQ divides codebook $e\in \mathbb{R}^{K \times M}$ into $G$ groups along K dimension, and $e^{(i)}\in \mathbb{R}^{N_g \times M},~N_g= \frac{K}{G}$ denotes the $i-$th group. 
During forward-propagation, each output of the encoder $z_e(x)$ is mapped to the nearest atom group, based on the average distance over all the atoms in the group. 
\begin{equation}
\begin{aligned}
& e^*=e^{(j)}, ~ where~j=\underset{k}{argmin}\ d(z_e(x),e^{(k)})\\
& d(z_e(x),e^{(j)})=\frac{1}{M}\sum_{k=1}^M\left \| z_e(x)-e^{(j)}_k \right \|_2
\end{aligned}
\end{equation}
The corresponding $i^{th}$ index of latent code for $z(x)$ is then computed as the weighted average of the atoms in the $e^*$.
\begin{equation}
\begin{aligned}
z(x)_i=\frac{\sum_{k=1}^{M}w_ke^*_k}{\sum_{k=1}^{M}w_k},~
w_k=\frac{1}{\left \| z_e(x)-e_k^* \right \|_2}
\end{aligned}
\end{equation}
All atoms will be updated in the back-propagation, based on the loss in Equation (1), by replacing $e^*$ with $\frac{\sum_{k=1}^{M}w_ke^*_k}{\sum_{k=1}^{M}w_k}$.

In addition, Sliced Vector Quantization (SVQ)  \cite{kaiser2018fast,rakhimov2020latent} is adopted by GSVQ to improve the efficiency of nearest neighbor embedding further. Each atom of the codebook is separated into $n_c$ parts along the M dimension, i.e. $e=\{e^j \in \mathbb{R}^{K \times M / n_c }\}_{1}^{n_c}$. Accordingly, the output of the encoder $z_e(x)$ is divided along the M dimension into $n_c$ components, and the VQ procedure is conducted in terms of the separated codebook at the corresponding spatial location. Figure 2 presents the GSVQ diagram.

\subsection{Disentanglement for Local Privatization}
\textcolor{black}{
We aim to disentangle the latent representation into two components. One reflects the semantics behind a specific sensitive attribute, and the other is irrelevant to this attribute as much as possible. Namely, we assume that the latent codes of instances consist of a public component (e.g., speech-to-text recognition) and a private component (speaker identification). 
}

\textcolor{black}{
Generally, two separate encoders and sensitive classifiers are required to extract style embeddings (private component) and content embeddings (public component), respectively, e.g., \cite{huang2017arbitrary}. However, it has been demonstrated that the style encoder is redundant \cite{wu2020one,wu2020vqvc+,chen2021again}. Therefore, we adopt the single encoder scheme to reduce the size of the parameters. 
Besides, supervision over the latent factors is always limited. Therefore, we organize the training samples for the disentangled GSVQ model in groups as well, where the samples share a common specific attribute within a group. For example, the group can be a set of face images with the same facial expression among different identities or the same phonemes voice belonging to the different persons. Group supervision enables the alignment of the semantics of the data (identity and other attributes) into the learned latent representation, as a form of weak supervision that is inexpensive to collect. Note that the independent and identically distributed (iid) is not necessary in the case of grouped samples. The only supervision at training is the organization of the data into groups  \cite{bouchacourt2018multi}. 
}

\textcolor{black}{
OCTOPUS' privatization service relies on disentanglement strategies only. 
To address the privatization of the local data, we first apply two disentangled strategies to divide the latent representation into public components (e.g., phonemes of speech) and private components (e.g., speaker identification). The disentanglement strategies employed by OCTOPUS are codebook quantization and instance normalization, without the use of identifiable information classifiers or adversarial training. 
The first disentangled strategy is codebook quantization. A well-trained vector quantized model can learn some commonly-shared features as a similar series of atoms of the codebook. For example, the codebook learned for speech can be highly related to content such as phonemes \cite{chorowski2019unsupervised,wu2020one,van2017neural,williams2021learning}. The same sentence spoken by different speakers would thus be projected to a similar codebook series. 
We force the codebook as the carrier for the commonly-shared features only (public component), while learning to represent private components such as speaker identification using the information discarded by the codebook quantization, i.e., the difference between continuous space and the discrete codes. 
The second disentangle strategy is Instance Normalization \cite{ulyanov2017improved}.
To further reduce the information of the public component (content) relevant to the private component (style), the Instance Normalization is adopted as the style normalization strategy for the codebook learning. IN can normalize the style of each individual input to the standard style. }

\begin{figure}[!htb]
	\setlength{\abovecaptionskip}{-0.05cm}
	\setlength{\belowcaptionskip}{-0.1cm}
	\centering
	\subfigure[The disentanglement strategy framework]{
		\begin{minipage}[b]{0.5\textwidth}
			\includegraphics[width=1\textwidth]{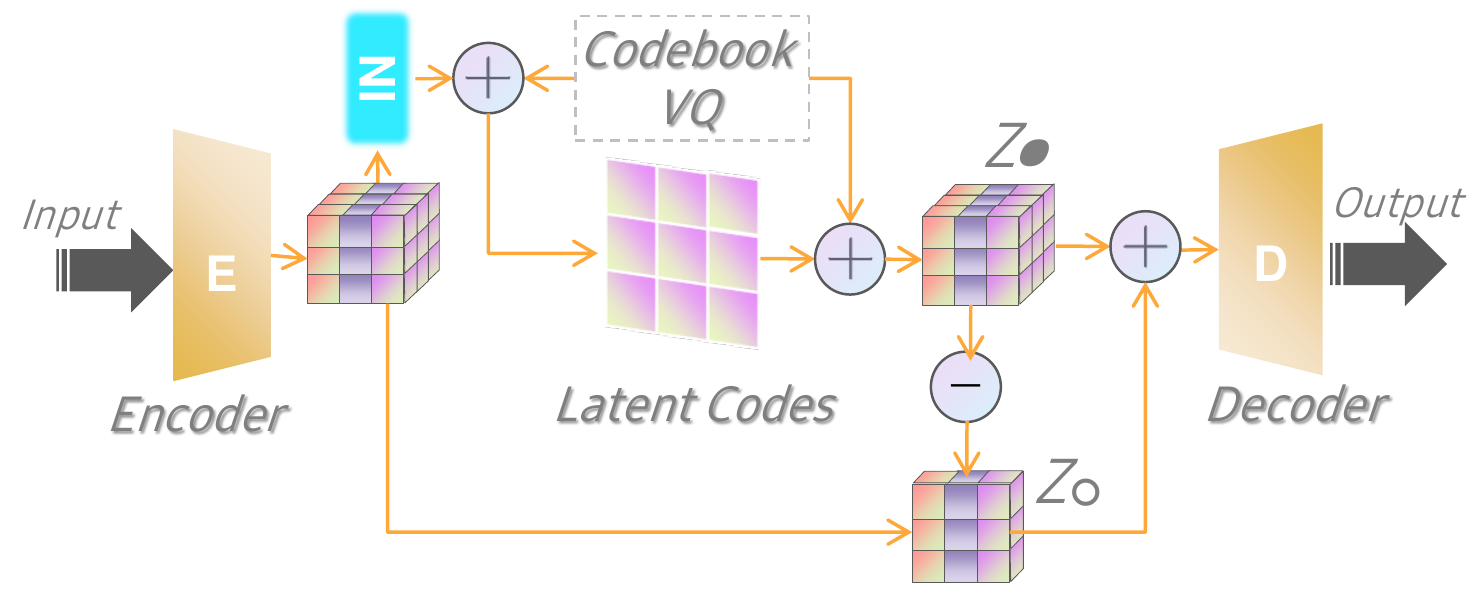}
		\end{minipage}
		\label{fig:dtf}
	}
    	\subfigure[IN normalization and VQ clustering]{
    		\begin{minipage}[b]{0.5\textwidth}
   		 	\includegraphics[width=1\textwidth]{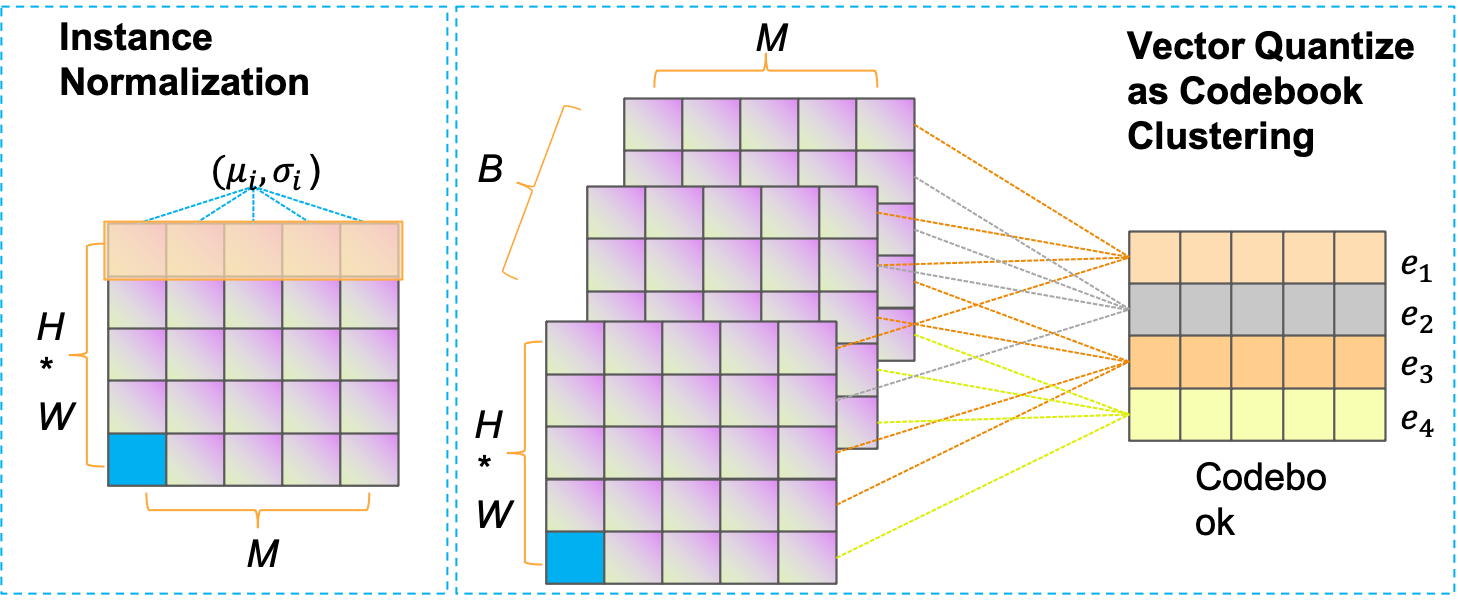}
    		\end{minipage}
		\label{fig:invq}
    	}
	\caption{The disentanglement strategy}
	\label{fig:dt}
\end{figure}

\textcolor{black}{
Given any input $x\in R^{C,H,W}$, normalized representation $IN(x)$ is computed based on the channel-wise mean $\mu$ and the channel-wise standard deviation $\sigma$ across spatial dimensions independently for each channel and each input. 
\begin{equation}
\begin{aligned}
\operatorname{IN}(x)=\gamma\left(\frac{x-\mu(x)}{\sigma(x)}\right)+\beta \\
\mu_{c}(x)=\frac{1}{H \times W} \sum_{h=1}^{H} \sum_{w=1}^{W} x^c_{hw}\\
\sigma_{c}(x)=\sqrt{\frac{1}{H\times W} \sum_{h=1}^{H} \sum_{w=1}^{W}\left(x^c_{h w}-\mu_{c}(x)\right)^{2}+\epsilon}
\end{aligned}
\label{eq:in}
\end{equation}
As the style information $ \mu$ and $\sigma$ are temporally invariant, they can be considered as private (style) representations. 
Consequently, an IN layer is added in the encoder block (before the VQ step) to filter more information about the private component from the encoded latent representation. Figure 3 shows the framework of the disentanglement strategies.
}

\textcolor{black}{
Formally, let $X^c=\{x_1, x_2, \cdots, x_T\}^c$ as a group of samples with the same sensitive class $c$, such as a sequence of acoustic features or a set of face images belonging to a specific person. Given $X^c$, the public component $Z_{ \bullet }$ and the private component $ Z_{\circ }$ is given as 
\begin{equation}
\begin{aligned}
Z_{ \bullet }= VQ(Z_e(x)), Z_{\circ } = E[Z_e(x) - Z_{ \bullet }]\\
\text { VQ }(\boldsymbol{Z_e(X))})=\left\{\boldsymbol{e}_{0}, \boldsymbol{e}_{1}, \ldots, \boldsymbol{e}_{T}\right\}, \\
\quad \boldsymbol{e}_{j}=\arg \min _{\boldsymbol{e} \in \mathcal{CB}}\left(\left\|\boldsymbol{Z_e(X)}_{j}-\boldsymbol{e}\right\|_{2}^{2}\right)
\end{aligned}
\end{equation}
Here, $ Z_{\circ }$ can be considered as the expectation of difference between $Z_e(x)$ and its vector quantized codes $Z_{ \bullet }$. Then, $Z_{ \bullet }$ is added back to $Z_{ \circ }$, fed into the decoder for the reconstruction.
}

\textcolor{black}{
Further, the latent loss $L_l$ is added to the reconstruction loss, aiming to minimize the distance between the $Z_{ \bullet }$ and the $Z_{ \bullet }$ via the IN layer. 
Accordingly, the total loss is updated to be
\begin{equation}
\begin{aligned}
L_{total} = E_{x\sim X}[\left \| D(Z_{ \bullet } + Z_{\circ }) -  x \right \|] + \\
\lambda\mathbb{E}_{t}\left[\|I N(\boldsymbol{Z_e(X)})-\boldsymbol{Z_{ \bullet }}\|_{2}^{2}\right]
\end{aligned}
\end{equation}
 }

A disentanglement strategy can be included in the initial DVQ-AE training at the server, followed by the deployment of the trained DVQ-AE to the distributed nodes. It is up to the data owner to decide whether to incorporate the sensitive components or not in the local data sharing. 

\subsection{Flexible and Stabilized Training} 
This strategy improves computing and storage performance and continuously updates local and global encoder/decoder/codebooks.
The distributed DVQ-AE is fine-tuned at each node by alternately updating the distributed encoder, joint decoder, and codebook on local data. 
Initially, the codebook is frozen for local fine-tuning and then updated at a lower frequency when local distribution changes. Local encoders and joint decoders have a fixed update frequency (e.g., one-shot fine-tuning), using local data to fine-tune.
Updating the codebook can address the drifting representation issue when the current DVQ-AE is not a good approximation for new data. The local codebook can be fine-tuned by updating the atoms of the codebook with an exponential moving average instead of the loss term \cite{van2017neural}, where it was used to reduce the variance of codebook updates. Specifically, $ \{z_{i,1},z_{i,2},\cdots,z_{i,n_i}\}$ denotes the set of $n_i$ outputs from the encoder that are closest to atom $e_i$ in the codebook so that the loss used to update the local codebook is:
\begin{equation}
    \sum_{j}^{n_i} \left \| z_{i,j}-e_i \right \|^2_2
\end{equation}
The optimal value for $e_i$ is simply the average of elements in the set:
\begin{equation}
   e_i=\frac{1}{n_i} \sum_{j}^{n_i}z_{i,j}
\end{equation}
The exponential moving average procedure is conducted as:
\begin{equation}
\begin{split}
& N^{(t)}_i := \gamma N^{(t-1)}_i+(1-\gamma)n^{(t)}_i\\
& m^{(t)}_i := \gamma m^{(t-1)}_i+(1-\gamma)\sum_j z^{(t)}_{i,j},~
e^{(t)}_i :=\frac{m^{(t)}_i}{N^{(t)}_i}
\end{split}
\end{equation}
with $\gamma=0.99$. The codebook update is conducted in a relatively lower frequency, such as monthly updates using the weekly samples $e^{(t)}_i $, then sent to the server.

\subsection{Privacy Analysis and Computational Adversary}
\subsubsection{Instance Normalization and Vector Quantization for privatization}
\textcolor{black}{
\textbf{Non-linear transformation via encoding.} 
Essentially, the encoder provides a non-linear transformation of the raw input data in pixel space to the latent feature embedding vectors. 
The training sample for the disentangled GSVQ model is divided into groups in order to perform weak supervision over the latent factors. The samples belonging to the same group share a specific characteristic, for example, a set of face images with the same facial expression among people of different identities or similar phonemes in the voice of different individuals. Here, identification is considered as style. 
The intuition of privacy-preserving for our approach is based on feature sharing instead of raw data sharing. Feature sharing is achieved via a mapping mechanism to control and reduce the inferences from shared data and filter sensitive components ( e.g., identification information) before sharing. 
Next, we explain how the IN and VQ procedures could filter sensitive components. 
}

\textcolor{black}{
\textbf{Style normalization via IN. }
It has been demonstrated that statistics on the convolutional feature of a DNN can describe the style of an image \cite{gatys2016image,li2016combining,li2017demystifying,huang2017arbitrary}. Matching many other statistics, including channel-wise mean and variance, has also been demonstrated to be effective for style transfer \cite{li2017demystifying}. 
The feature embedding from the encoder on a given input could also be considered as  $X=H \times W$ feature embedding vectors across the D channels $fd \in R^{D}$, denoted by M as a matrix of dimension $X \times D$. 
Instance normalization (IN) normalizes the feature embedding vector among the channel axes for each feature dimension. The features learned at each channel could be considered as global abstraction information, such as the R, G, B channels for images. The normalization of the channels also aims to align the feature distribution on each channel to a commonly-shared standard distribution. Therefore, the commonly shared content information (such as facial expressions) will be retained, and various style information (such as different identifications) will be generalized. 
In light of these findings, we claim that instance normalization achieves a form of style normalization through the normalization of feature statistics, specifically the mean and variance. 
IN normalization transforms each element of matrix M into a standard Gaussian distribution with $(\mu_i, \sigma_i)$. The normalized matrix M' preserves the properties of M, as a distorted version. Next, the distorted matrix M' is further shifted by multiplying it with the shifting factors $\gamma$ and $\beta$ in Eq. \ref{eq:in}, to increase the security of data.
}

\textcolor{black}{
\textbf{Codebook based clustering via VQ. }
Release latent codes z derived from the Vector Quantized procedure according to the codebook could be viewed as the cluster within a group along each feature axis. 
Based on a distance measure, the embedding vectors after the IN normalization on the feature axis $a_i \in R^{H \times W}$ in a batch of feature vectors with a size of B, denoted by $\{R^{D \times B}\}^{a_i}$, will be clustered according to each cluster center $e_j \in Codebook~CB$. This clustering procedure could also be considered as another non-linear data transformation. As a result of clustering, the features of each axis are generalized within the group, where global content information (such as facial expressions) will be retained, and style information (such as various identifications) will be generalized. 
}

\subsubsection{Against computational adversary}
\textcolor{black}{
In this work, conditional entropy is used as the privacy measurement against a computational adversary: a classifier modeled by a neural network to model the distribution $q(Y|Z_{ \bullet })$. The identifiable information classifier is used only for the post evaluation of privacy leakage after the implementation of OCTOPUS, without incorporating it into the training or implementation of OCTOPUS. 
Given random variable Y that represents the real distribution of data attribute (e.g., the identification of speaker), and a random variable Z that describes the observations of the adversary (e.g., information about messages in a communications network). 
The conditional entropy $\mathbb{H}(Y|C)$ indicates how much information is required to describe Y derived from the released latent codes. The neural network classifier trained with cross-entropy loss is a proxy for minimizing the conditional entropy of the sensitive class given the observed latent code features $Z_{\circ}$ or $Z_{ \bullet }$ \cite{agrawal2001design,diaz2007does,bouchacourt2018multi}. 
Datasets for training and testing the identifiable information classifier could be derived directly from those used for training OCTOPUS. Given the encoder of the trained OCTOPUS and an instance $x$, the training/testing input for the identifiable information classifier is a pair of public components $Z_{ \bullet }$ obtained from the encoder and the identification label for x. 
Namely, taking the released public components $Z_{ \bullet }$ as adversarial observations, the computational adversary trains a neural network classifier to model the distribution $q(Y|Z_{ \bullet })$ via minimizing the cross-entropy loss in terms of a sensitive class (attribute), such as the identification of the speaker.
\begin{theorem}
The conditional entropy of the sensitive class Y given the public component of latent code $Z_{ \bullet }$ is equal to maximize the $\mathbb{E}_{p(Y, Z_{ \bullet })}[\log p(Y \mid Z_{ \bullet })]$, which is revealed as:
\begin{equation}
\mathbb{H}(Y \mid Z_{ \bullet })=-\mathbb{E}_{p(Y, Z_{ \bullet })}[\log p(Y \mid Z_{ \bullet })]
\end{equation}
\end{theorem}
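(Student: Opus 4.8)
The plan is to observe that the claimed equality is, at bottom, just the definition of Shannon conditional entropy rewritten as an expectation, and then to spell out why the adversary's cross-entropy training objective is a faithful proxy for it via a standard variational argument.

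First I would start from the definition $\mathbb{H}(Y \mid Z_{\bullet}) = -\sum_{z_{\bullet}} p(z_{\bullet}) \sum_{y} p(y \mid z_{\bullet}) \log p(y \mid z_{\bullet})$, with the outer sum replaced by an integral in the continuous case and the convention $0\log 0 = 0$. Using the chain rule $p(y, z_{\bullet}) = p(z_{\bullet})\,p(y \mid z_{\bullet})$ to collapse the nested sums gives $\mathbb{H}(Y \mid Z_{\bullet}) = -\sum_{y, z_{\bullet}} p(y, z_{\bullet}) \log p(y \mid z_{\bullet}) = -\mathbb{E}_{p(Y, Z_{\bullet})}[\log p(Y \mid Z_{\bullet})]$, which is exactly the asserted identity; consequently minimizing $\mathbb{H}(Y \mid Z_{\bullet})$ is the same as maximizing $\mathbb{E}_{p(Y, Z_{\bullet})}[\log p(Y \mid Z_{\bullet})]$.

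Next I would connect this to the neural-network classifier. Let $q(Y \mid Z_{\bullet})$ be the adversary's model; its population cross-entropy loss is $-\mathbb{E}_{p(Y, Z_{\bullet})}[\log q(Y \mid Z_{\bullet})]$. Adding and subtracting $\log p(Y \mid Z_{\bullet})$ inside the expectation decomposes this loss as $\mathbb{H}(Y \mid Z_{\bullet}) + \mathbb{E}_{p(Z_{\bullet})}\bigl[ D_{\mathrm{KL}}\bigl( p(Y \mid Z_{\bullet}) \,\|\, q(Y \mid Z_{\bullet}) \bigr) \bigr]$. By Gibbs' inequality the KL term is non-negative and equals zero iff $q = p$ on the support of $p(Y, Z_{\bullet})$, so the cross-entropy loss upper-bounds the conditional entropy with equality at the Bayes-optimal classifier; hence training $q$ to minimize cross-entropy drives it toward $\mathbb{H}(Y \mid Z_{\bullet})$, which is why this quantity is adopted as the privacy measure against the computational adversary.

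The computation is entirely routine, so the only thing requiring care is measure-theoretic bookkeeping: restricting attention to the support of the joint law so that the conditional densities and the logarithms are well-defined, confirming finiteness of the expectations, and phrasing the discrete and continuous cases uniformly. The hard part, such as it is, is thus purely a matter of stating hypotheses cleanly rather than any genuine mathematical difficulty.
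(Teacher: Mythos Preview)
Your proposal is correct and follows essentially the same route as the paper: both treat the stated identity as the definition of conditional entropy and then establish the variational bound by decomposing the cross-entropy loss as conditional entropy plus a nonnegative KL term. The only cosmetic difference is that you phrase the slack as the expected conditional divergence $\mathbb{E}_{p(Z_{\bullet})}\bigl[D_{\mathrm{KL}}(p(Y\mid Z_{\bullet})\,\|\,q(Y\mid Z_{\bullet}))\bigr]$, whereas the paper writes it as the joint $\operatorname{KL}\bigl(p(Y,Z_{\bullet})\,\|\,q(Y,Z_{\bullet})\bigr)$ under the implicit convention $q(Y,Z_{\bullet})=p(Z_{\bullet})q(Y\mid Z_{\bullet})$; these are the same quantity.
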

\begin{proof}
\begin{equation}
    \begin{aligned}
    &\mathbb{H}(Y \mid Z_{ \bullet }) =-\mathbb{E}_{p(Y, Z_{ \bullet })}[\log p(Y \mid Z_{ \bullet })]\\
    &=-\mathbb{E}_{p(Y, Z_{ \bullet })}\left[\log \frac{p(Y \mid Z_{ \bullet })}{q(Y \mid Z_{ \bullet })} q(Y \mid Z_{ \bullet })\right]\\
&=-\mathbb{E}_{p(Y, Z_{ \bullet })}[\log q(Y \mid Z_{ \bullet })]-\mathbb{E}_{p(Y, Z_{ \bullet })}\left[\log \frac{p(Y \mid Z_{ \bullet })}{q(Y \mid Z_{ \bullet })}\right] \\
&=-\mathbb{E}_{p(Y, Z_{ \bullet })}[\log q(Y \mid Z_{ \bullet })]-\mathbb{E}_{p(Y, Z_{ \bullet })}\left[\log \frac{p(Y, Z_{ \bullet })}{q(Y, Z_{ \bullet })}\right] \\
&=-\mathbb{E}_{p(Y, Z_{ \bullet })}[\log q(Y \mid Z_{ \bullet })]-\operatorname{KL}((p(Y, Z_{ \bullet }) \| q(Y, Z_{ \bullet })) \\
&\leq  -\mathbb{E}_{p(Y, Z_{ \bullet })}[\log q(Y \mid Z_{ \bullet })]\nonumber
    \end{aligned}
\end{equation}
As the Kullback-Leibler value $\operatorname{KL}((p(Y, Z_{ \bullet }) \| q(Y, Z_{ \bullet })) >0$ is always positive, minimizing the cross-entropy loss for training the neural network classifier is equivalent to minimizing an upper bound on the $\mathbb{H}(Y \mid Z_{ \bullet })$. 
\end{proof}
After training the classifier, the value of $\mathbb{E}_{p(Y, Z_{ \bullet })}[\log q(Y \mid Z_{ \bullet })]$ on the test dataset is used as conditional entropy in bits to measure the privacy leakage of the given sensitive attribute Y from the adversarial observation on $Z_{ \bullet }$. A lower conditional entropy value reveals less risk of privacy leakage. 
Likewise, we also measure the privacy leakage of the given sensitive attribute Y from the adversarial observation on private component of latent codes $Z_{ \circ }$ via $\mathbb{H}(Y \mid Z_{ \circ })$, via training and testing the classifier using the cross-entropy loss $\mathbb{E}_{p(Y, Z_{ \circ })}[\log q(Y \mid Z_{ \circ })]$. A higher conditional entropy value is expected as the private component is assumed to contain as much sensitive information in terms of Y as possible. The classifier is trained on a dataset consisting of M instances for each class while using $m \ll M$ instances per class to conduct the testing and conditional entropy evaluation. In addition, we also apply classifier test accuracy as an additional measure of privacy leakage.
}

\textcolor{black}{ Note that the computational adversary is only trained to evaluate the privacy, without being incorporated into the training of the OCTOPUS scheme. The privatization service of OCTOPUS is derived from disentanglement instead of adversarial training. 
To balance the trade-offs between privacy and utility, it is required to satisfy the following two properties: 
1) For the private component, the performance of an unseen attribute classifier is also substantially reduced;
2) For the remaining set of public components, the performance of an attribute classifier on released public components is close to its performance on the original instance.
}

\subsection{Overheads Analysis}
\textcolor{black}{
In this section, we conduct quantitative evaluations of the communication overheads for Octopus, compared to federated learning and split learning. 
Let $N_C$ denote the amount of clients, $N_M$ for the size of model parameters, $N_D$ for the size of data, $N_E$ for training epoch, $N_Z$ as the size of latent codes of Octopus. $N_S, ~\eta$ denote the size of the smashed layer and the fraction of parameters for the clients for split learning. 
Communication efficiency here measures the data volume transmitted across all clients and servers for training and synchronization. The data set is assumed to be equally distributed among clients for all settings for simplification. We also ignore the temporal heterogeneity that the distribution of each local dataset may vary with time. 
}

\textcolor{black}{
\textbf{Overheads of ordinary FL.} For ordinary federated learning schemes, the communication for each client is to upload gradients of parameters and then download the averaged gradient with the server for each training epoch. Then the total communication is $2\times N_C \times N_M \times N_E$. 
}

\textcolor{black}{
\textbf{Overheads of gradient compression for communication efficiency.} 
Gradient quantization and sparsification are two commonly adopted approaches to compress the uploading gradients towards the communication efficiency of FL 
\cite{sun2020adaptive,li2021talk,xu2020ternary,lin2017deep,kairouz2019advances}. Generally, the communication is conducted on some selected clients via uploading compressed gradients of parameters and then downloading the averaged gradient with the server for each training epoch. The total communication is $(N_C^{selected} \times N_M^{up} + N_C \times N_M ) \times N'_E$. 
The communication compression via quantization, sparsification or clustering  strategies could reduce the $N_C^{selected} \times N_M^{up} \times N'_E$ part. However, the compressed/quantized communication results in distortion of the gradients and a much lower learning rate, which further significantly increases the more communication rounds necessary to reach convergence, i.e., $N'_E \gg N_E$, even fail to converge.
Besides, extra complex procedures are required to handle the hard-to-converge issue, which is also not well generalized to some models such as ResNet101. For complex models (e.g., BERT), quantization-based approaches incorporate a significant decrease of accuracy. 
Furthermore, downstream update part $N_C \times N_M \times N'_E$ will not even be compressed at all.
}

\textcolor{black}{
\textbf{Overheads of split learning.} For split learning, the communication for each client, is to upload activations during forwarding propagation and download gradients during backward propagation. If considering weight sharing among clients to enhance synchronization, even with the cost of more information leakage, the total communication is $(2\times N_S \times N_D + \eta N_C \times N_M ) \times N_E$. The communication efficiency is then defined as the ratio of data transfers of federated learning and split learning, i.e., $\rho=\frac{2\times N_C \times N_M \times N_E}{2\times N_S \times N_D + \eta N_C \times N_M}$ \cite{singh2019detailed}, which could be used to decide the learning scheme according to the ratio.
}

\textcolor{black}{
\textbf{Overheads of heterogeneity.} If taking the extra strategies to address the data heterogeneity issues into consideration, besides the computation overhead and decrease of the accuracy, the direct influence for the communication is the increased $N_E$ caused by more iterations for model convergence. 
Further, if we also consider the temporal heterogeneity that the distribution of each local dataset may vary with time, a large number of iterations is necessary to fine-tune the model using new data, even retraining. 
}

\textcolor{black}{
\textbf{Overheads of multi-task scenarios.} In this work, we aim at the multi-task scenarios as well, where various models are required for massive downstream tasks. For federated learning and split learning, the entire training procedures are needed to be rerun many times, resulting in multiplied communication and computation overheads. 
}

\textcolor{black}{
\textbf{Overheads of Octopus.} 
For Octopus, the communication for each client is to upload the latent representation of the collected data and then download the trained model for once-off. Then the total communication is $ N_D \times N_Z + N_M+ \pi N_B +N_A$. Generally, the communication overhead of Octopus is far less than the federated learning and split learning. The advantages of the Octopus in the bias of overheads are summarized as follows, compared to existing FL solutions. 
 \newline
(1) \textit{\textbf{Communication size and round.}} The general communication size for Octopus is $N_Z$, which is $\ll N_M$ and $N_S$ for others. The communication round for Octopus only involves (a) once-off data collection and finalized model download, i.e., the $N_E=1$, avoiding massive communication rounds $N_E$ used for training iterations in other solutions. The initial autoencoder and the final downstream task models are trained at the server, and only once-off download communication $N_A$ is required from the server to the clients. (b) few-shot codebook updates. $N_B$ is the size of the codebook, at most, $256 \times 64$ (e.g., for 1024*1024*3 images) in our experiments, and $\pi$ is a low-frequency updating rate of the codebook, generally less than 10. 
 \newline
(2)\textit{ \textbf{Model independent and lightweight.}} As the complexity of the learned model grows, so does the communication overhead and the number of iterations for the existing FL scheme. Existing communication compression and optimization strategies may also be ineffective when dealing with complex models. In Octopus, the distributed pre-trained encoders serve as the data collector and feature extractor, allowing model-agnostic and independent communication and distributed learning, via lightweight models. 
 \newline
(3) \textit{\textbf{No/Tiny burden to handle heterogeneity.}}  Due to the global data collection, Octopus does not have to add an extra communication burden in order to handle the heterogeneity issues. Octopus uses a non-training strategy to deal with temporal heterogeneity, i.e., a simple exponential moving average. The communication overhead is only increased by few-shot updating iterations on the codebook. 
 \newline
(4) \textit{\textbf{Multi-task.}} For Octopus, there is also no uploading communication burden added to the multi-task training, and only once-off communication to download trained models to clients. As the latent representation learning could be treated as generalized feature extraction for all downstream tasks, the computation overhead is also reduced for Octopus compared to training massive models for different tasks from scratch independently. 
 \newline
(5) \textit{\textbf{Trade-offs.}} Existing strategies to handle heterogeneity, privacy, and communication are typically incorporated into the learning procedure, resulting in negative impacts on accuracy and convergence. Data collection/feature extraction and learning are separated in Octopus, which provides a better balance between accuracy and other objectives. Octopus simplifies the heterogeneity, communication efficiency, and privacy challenges into a single, less expensive task.
}

\section{Experiments}
\subsection{Datasets and Settings}
We present the results of experiments on the MNIST \cite{mnist}, CelebA \cite{liu2015faceattributes,karras2017progressive} (resize to $128 \times 128$) and Speech \cite{oord2016wavenet} datasets to demonstrate the effectiveness of our approach in terms of performance on performance and privatization. 
\textcolor{black}{
We split all samples into the training set $Tr$ (80\%) and hold out the rest as a test set $Te$ (20\%).
For the \textit{Octopus} training, 85\% of the \textit{Tr} is divided into different subsets as the distributed data sources, where the data is sorted by class, and each node receives data partition from only a single class. The remaining 15\% is used as the additional data (ATD, e.g., as the public-released relevant datasets).
For the \textit{Centralized} scenario: The set of uncompressed \textit{Tr} data is used as the entire collected data (CTD) to train the downstream tasks in a centralized manner. Then, the test set \textit{Te} is used to evaluate the performance metrics to set the baseline. 
For the \textit{Federated} scenario, the $Tr$ dataset is divided into different subsets as the distributed data sources for federated learning to train the same downstream tasks. The test set is then used to obtain the needed performance metrics. 
We mimic non-IID data partitions through dividing datasets based on data labels to evaluate performance under heterogeneous data.
In an independent and identically distributed (IID) setting, each node is randomly assigned a uniform distribution across all classes, which is the best-case scenario of the non-IID setting. We consider the worst-case of the non-IID setting, where the data is sorted by class, and each node receives data partition from only a single class. Additionally, we also control the skewness through varying the proportion of data that are non-IID to demonstrate the effect of varying the skewness. We apply 20\% non-IID as the moderate-case of the non-IID settings, in which 20\% of the dataset is separated in terms of labels, while the remaining 80\% is divided uniformly at random. 
To make a fair comparison, we also consider the additional data(ATD, e.g., as the publicly-released relevant datasets) as the globally shared between all the client devices \cite{zhao2018federated}.
}

\textcolor{black}{
The same notations for Federated Averaging (FedAvg) algorithm in \cite{mcmahan2017communication} are adopted. We apply 100 clients, each running one local epoch on top of 100 global communication rounds.
We also consider $FedProx$ \cite{li2018federated} as the approach to accelerate tuning hyperparameters for algorithms of FedAvg in the case of federated learning with personalization. 
We also evaluate the performance when the federated learning is incorporated with differential privacy to address the privacy of personally identifiable information, at $(\epsilon, \delta) = (10, 10^{-5})$-DP. 
}

\textbf{OCTOPUS  scenario:}  \textcolor{black}{
The ATD data here is the additional data that we assume could be obtained from a public-released relevant dataset. The server uses it to initialize the DVQ-AE, instead of sharing globally. } The next step is to fine-tune the local model or codebook using the distributed data source (used in the federated scenario). Each sample from every distributed data source is mapped to the node-specific autoencoder's encoder. The server gathers the encoded latent codes to be the training set to train the same downstream tasks. After that, the model's performance evaluations are obtained using the encoded version of the test set. 

In our experiments, we explore four group setups for the codebook structure with $N_v=1,4$ and $N_h=1,41$, respectively. Let the codebook size $B_x$ reveal the compression size, namely the size of compressed communication, and each group setup is conducted on  $B_{32}$,  $B_{64}$,  $B_{128}$,  $B_{256}$, and  $B_{512}$ settings. 
\subsubsection{Evaluation Settings and Metrics}
\textcolor{black}{
The following experimental settings and quantitative metrics are used to evaluate the effectiveness of the OCTOPUS framework.
\newline
(1) \textbf{Downstream task evaluation.} The first setting is the test accuracy of the downstream task models obtained after the distributed training process. 
In this setting, we consider learning a specific classifier of downstream tasks with the same neural network structure as the common target task for centralized learning (with/without differential privacy), federated learning (under IID, non-II, FedProx, data sharing, and differential privacy settings), and OCTOPUS. Generally, these schemes are performed by (i) a feature extractor, which derives embedding from raw data, e.g., three Conv1d layers with 256 hidden units, and (ii) a downstream classification function based on the learned embedding, e.g., one fully connected layer with SoftMax activation function. 
We assume the server's downstream tasks are attribute classification for MNIST (containing a circle or not) and CelebA (gender and smiling), and phoneme identity accuracy for SPEECH. 
For OCTOPUS, latent codes collected from distributed nodes via the pre-trained encoders are used to train the classifier at the server for classification or text recognition. 
Classification accuracy is used for images and Word Error Rate (WER) for speech. 
Experimental analysis is given in Section \ref{sec:evadt}. We also evaluate multi-tasks performance on the latent codes for OCTOPUS in Section \ref{sec:evacdoe}. 
\newline
(2) \textbf{Identifiable information evaluation.} Our second experimental setting is to evaluate the leakage of sensitive information from the commonly trained and shared models derived from centralized, federated, and OCTOPUS learning schemes. 
In this setting, we consider learning a specific classifier of PII information with the same neural network structure as the common target task for centralized learning (with/without differential privacy), federated learning (with IID as the best-case and with differential privacy), and OCTOPUS. Generally, these schemes are performed by (i) a feature extractor, which derives embedding from raw data, e.g., three Conv1d layers with 256 hidden units, and (ii) a downstream classification function based on the learned embedding, e.g., one fully connected layer with SoftMax activation function. 
The PII information on the server is identity recognition for MNIST (digit number itself), and CelebA (personal identification), and SPEECH (speaker identification). 
The training data for centralized and federated settings are the raw data samples. For OCTOPUS, latent codes collected from distributed nodes via the pre-trained encoders are used to train the classifier at the server for classification or text recognition. 
The classification accuracy of the identifiable attribute is used to evaluate privatization. Experimental analysis is given in Section \ref{sec:evap}. 
\newline
(3) \textbf{Disentanglement evaluation.} Taking the same identifiable information classifier, we test the recognition accuracy of identification using a set of samples with identifications excluded in the training and testing dataset on various sizes of embedding codes, as the degree of disentanglement metric for OCTOPUS. Experimental analysis is given in Section \ref{sec:evad}. 
\newline
(4) \textbf{Performance and complexity evaluation.} The training and testing time and the compression sizes are also investigated as evaluation metrics. Experimental analysis is given in Section \ref{sec:evaother}, \ref{sec:evasetting}  and \ref{sec:evatime}. 
}
\newline
The structure and hyperparameters of the encoder of DVQ-AE are given in Appendix A. 
The design, training, and testing of the deep learning models (Autoencoders and CNN Classifiers) were implemented using the Keras deep learning framework on TensorFlow, running on NVIDIA Tesla P100 GPU. 

We consider the premise of the initial global estimation only (cold start problem). As an additional advantage, we prefer to use the pre-trained model as the starting point, and then only fine-tuning is required for our scheme. The speech pre-trained initial model, for example, can be used in most speech recognition scenarios.
We have a validation subset for the initial global model training. Most hyperparameters are learned at initial learning and directly used for the following procedures.

\subsection{Evaluation on Test Accuracy on Downstream Tasks}
\label{sec:evadt}
We evaluate downstream task models' testing accuracy when trained and tested in Centralized, Federated, and OCTOPUS scenarios with MNIST, CelebA, and Speech datasets. 

Figure 4 compares the downstream task models' testing accuracy in the OCTOPUS scenario using various compression sizes compared to centralized and federated scenarios.  
Testing accuracy for the centralized scenario (uncompressed data), the baseline of accuracy, is the highest among all cases, as all features in the raw data can be used to train models. 
Additionally, the testing accuracy for OCTOPUS is higher than that of Federated in both non-IID and privacy-preserving settings on all three datasets. Also, it has been found that the more complex the image, the more significant the gap. 
As shown, there is a general degradation in the testing accuracy when the compression size increases from $B_{512}$ to $B_{32}$, as the number of features utilized to train models is decreased. The smaller the compression size, the more features the model has to train a stranger model. 

\textcolor{black}{
The rate of reduction of the model's testing accuracy varies across the models on two image datasets, with MNIST being the highest of all the compression sizes. The reason is that the number of features for simple data of small dimensions is even smaller than encoded features. The significant degree of degradation is also visible in Speech, since the complex downstream task requires more features to train the model. 
We further demonstrate that there is significant degradation of the utility for federated learning in the non-IID scenarios (approximately 10\%-30\% drop in the downstream task accuracy for the worst-case and moderate-case of non-IID settings) and after incorporating the privacy-preserving procedure (approximately 30\%-50\% drop in the downstream task accuracy). The OCTOPUS avoids such degradation by incorporating advantages from both centralized learning and federated learning. 
}

\textcolor{black}{
As differential privacy (DP) is the commonly adopted extra privacy-preserving approach used in FL, we also demonstrate the utility of the DP-based FL scheme after applying the perturbation to the communication. 
For the MNIST dataset, as shown in Figure 4, original samples can easily be recognized with more than 99\% accuracy on average for the centralized scenario. However, the DP-based privacy-preserving centralized case appears to have lower accuracy than expected in comparison to the centralized scenario. 
Even under $B_{32}$ compression size, the accuracy of OCTOPUS is generally better than that of centralized learning with DP and federated in non-IID and with DP cases. 
For the CelebA dataset, the classifier trained on the centralized case can achieve around 90\% test accuracy on both utility tasks (gender and smiling) on average, while our trained downstream task using OCTOPUS can achieve a comparable accuracy (87\% on average) on the compact latent representations. The accuracy of OCTOPUS is better than that of centralized learning with DP (82\% on average) and federated in non-IID (51\% for the worst-case and 56\% for the moderate-case) and with DP (30\% on average) cases. 
The evaluation on Speech confirms that OCTOPUS does not degenerate the utility of the expected downstream task too much compared to the centralized scenario and is better than the federated scenario. 
}

\textcolor{black}{
Generally, applying FedProx for tuning hyperparameters could enhance the performance (accuracy of the tasks) of FedAvg under non-IID settings by around 10-20\% on average compared to the worst-case of non-IID scenarios, but still less than the best-case. Besides the extra cost for conducting these improvements, it is also required to apply further privacy-preserving approaches, such as differential privacy, resulting in accuracy degradation. 
After applying data-sharing strategy \cite{zhao2018federated} to improve FedAvg with non-IID data via globally sharing the ATD data among all the client devices, the test accuracy can be increased by 10-30\% on the image datasets compared to the worst-case of non-IID scenarios, though still less than the best-case as well.
Even for the best-case of the non-IID case ( uniform distribution over all classes), the accuracy of the federated learning decreases by 30\%. 
For federated learning with hyperparameter tuning and data-sharing strategies, the accuracy enhancement cannot defeat the degradation after deploying privacy-preserving mechanisms (e.g., differential privacy).
}
\begin{figure*}[!htb]
	\centering
	\setlength{\abovecaptionskip}{-0.05cm}
	\setlength{\belowcaptionskip}{-0.1cm}
	\includegraphics[width=6.5in]{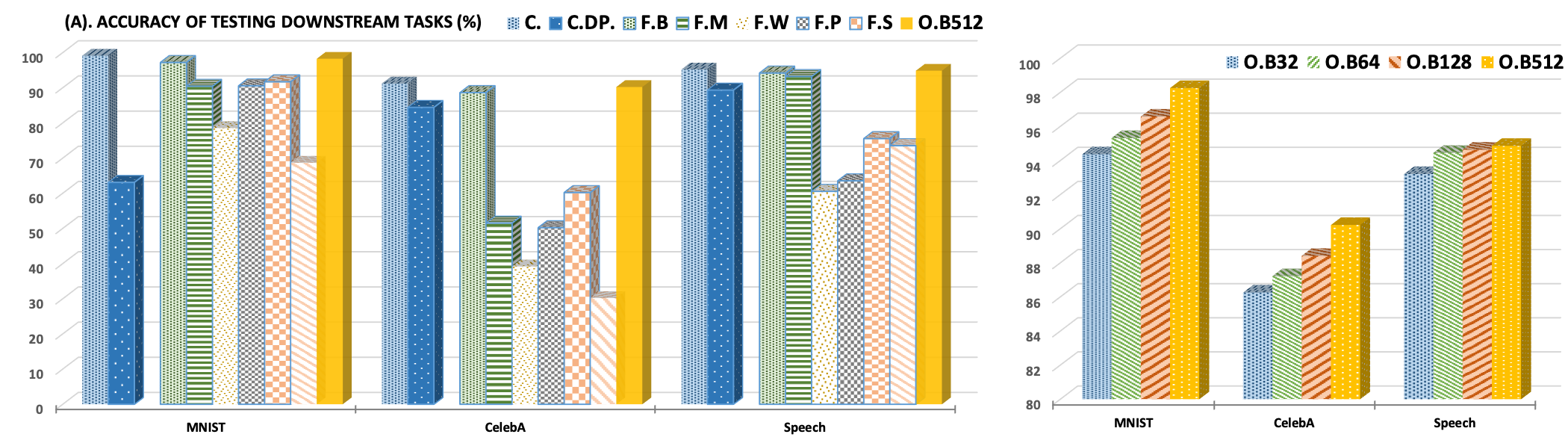}
	\caption{The comparison of the downstream task models' testing accuracy on centralized (Cen. and with differential privacy C.DP.), federated (with IID Fed.I as the best-case, with worst-cased and moderated case of non-IID settings F.W. and F.M, with FedProx F.P, with data sharing F.S, and best case with differential privacy F.N.DP.) and our methods with various size of embedding codes, respectively.}
\end{figure*}

\subsection{Evaluation on Privatization}
\label{sec:evap}
\textcolor{black}{
In order to protect FL communication privacy, additional privacy-preserving strategies such as secure computation, secure aggregation, and differential privacy schemes are usually necessary, resulting in additional computational overhead or reduced utility. In this section, we demonstrate the privatization ability of the OCTOPUS with no extra privacy-preserving mechanisms incorporated. 
We evaluate the privacy leakage via the recognition accuracy of the sensitive information and the conditional entropy in bits on the classifier testing set. 
}

\textcolor{black}{
The first metric we evaluate privatization is the recognition accuracy of sensitive information. 
For both FL and centralized settings, the identity classification accuracy remains high, which indicates that a significant amount of sensitive information has been collected from raw data during training. Adopting privacy-preserving technologies (e.g., differential privacy) could reduce the accuracy of identity classification, however, the values are still high (around 70\%). 
For the MNIST datasets, the digit number itself is considered as the private component. The task of downstream recognition is of determining whether the digit contains a circle (i.e., 0,6,8,9) or not. 
Figure 5 (a) reports the recognition accuracy of the sensitive information as a privacy evaluation metric under the settings given in Section 3.1.1. 
Classification accuracy for the private component via OCTOPUS decreases rapidly, from 95\% to below 10\%, while classification accuracy for circle recognition remains similar to that of the centralized scenario. It follows that the OCTOPUS does not degrade the utility of the expected downstream task too much while also significantly limiting the leakage of PII. 
In the CelebA dataset, we consider the identification as a privacy component, and gender and smiling recognition as downstream tasks utilizing the model in \cite{lu2017fully,torfason2016face}. 
As shown in Figure 5, the OCTOPUS model can reduce the test accuracy of private information (identification) from 85\% down to around 10\% on average, demonstrating the ability to protect private information. It demonstrates that the filtered data can still serve the desired classification tasks at the cost of only a small drop in utility accuracy (4\%). 
}

\textcolor{black}{
Considering differential privacy is associated with a lower computational cost to protect privacy via perturbation, we compare the performance between sensitive information perturbation and sensitive information filtering/replacing in terms of privacy protection. Specifically, we compare the privatization performance between OCTOPUS and centralized/federated learning with differential privacy. As shown in Figure 5, the inference probability of personally identifiable information for OCTOPUS (between 10-20\%) is much better than the other (between 30-80\%), avoiding introducing additional cost due to perturbation. The results demonstrate that sensitive filtering/replacement can achieve better privatization than perturbation. 
}

\textcolor{black}{
To further show the reconstruction with different private component settings and the possibility of generating anonymous copies and style transformations, we demonstrate the reconstructed samples in Figure 6. 
First, the downstream tasks and privacy-preserving evaluations are conducted only on the public component (the $Z_{ \bullet }$), which is derived from the pre-trained encoder and VQ module. Reconstruction is not required for these downstream tasks and privacy-preserving evaluations. 
During the reconstruction, both the public component $Z_{ \bullet }$ and the private component $Z_{ \circ }$ are required to decode back to pixel space (images), which could be considered as sampled points from the learned distribution. Since the private component contains sensitive information, this is the reason we filter them from the released features. When the public component $Z_{ \bullet }$ is combined with empty or totally random private component $Z_{ \circ }$, the decoder would produce a blurry reconstruction, as the decoder is trained on the combination of both $Z_{ \bullet }$ and $Z_{ \circ }$ from learned distribution. 
Also, it would be interesting to know what would happen if an adversary has the ability to feed private components into reconstruction, particularly those derived from publicly available training data.
Therefore, we demonstrate two possible style transformations with available private components $Z_{ \circ }$ in Figure 6: (1) perturbed $Z'_{ \circ }$ by adding random noise to the extracted private component, and (2) replacing with a different private component $Z''_{ \circ }$ derived from publicly available training data, such as ATD data. 
After reconstruction with perturbed $Z_{ \circ }$ , the sensitive information is sufficient to fool a well-trained classifier and to be difficult for humans to recognize, as demonstrated in Figure 6 (a). It is possible that a tiny perturbation on the private component could blur the digit identity while retaining the circle recognition information for MNIST. 
The replacing approach could be considered as a strong case for the perturbation approach. For example, we might use the private component (or further perturbed version) of fake face images to replace the original private component of the sensitive raw data. 
In Figure 6 (b), it is challenging to obtain sensitive information (e.g., human identification) for both machine and human perception after replacing the private component (extracted from reference samples from ATD) during reconstruction. This shows that the reconstruction with private components replaced maintains the public features such as facial expression but differs in the identification. 
For the Speech dataset, we consider the speaker's identification as the privacy component, the phoneme identity as the downstream task used the model in  \cite{oord2016wavenet}. The evaluation of Speech confirms that OCTOPUS significantly reduces the privacy leakage of the PII. 
}

\textcolor{black}{
\textbf{Conditional entropy as privacy.}
After this, we evaluate the privacy leakage of both private and public components in terms of the conditional entropy in bits on the classifier testing set, i.e., the results of the classifier trained on transmitted private, public, and both components. 
As shown in Figure \ref{fig:ceacc}, taking MNIST and Face data, for example, the performance of the classifier trained on the private components of latent codes has a lower conditional entropy value and higher test accuracy than the fully released latent codes (private + public) and the public components. It reveals that the private components of the latent codes $Z_{\circ}$ are more informative about the sensitive label. Additionally, the performance of the classifier trained on the public components of latent codes has a higher conditional entropy value and lower test accuracy than the fully released latent codes and the public components. It means the public components of the latent code $Z_{bullet}$ are uninformative in terms of the sensitive class label. 
}

\begin{figure}[!htb]
	\centering
	\setlength{\abovecaptionskip}{-0.05cm}
	\setlength{\belowcaptionskip}{-0.1cm}
	\includegraphics[width=3.4in]{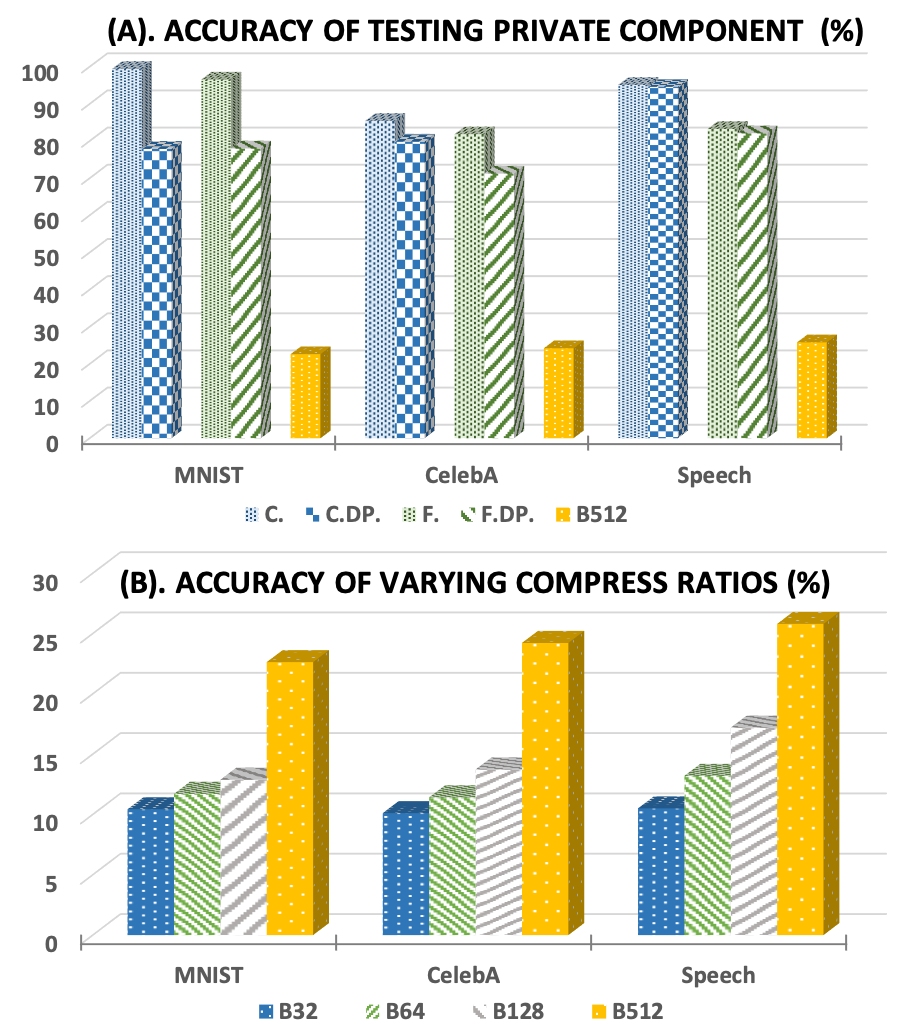}
	\caption{The comparison of the privacy components' testing accuracy on  centralized (Cen. and with differential privacy C.DP.), federated (with IID as the best-case F. and with differential privacy F.DP.) and our methods with various size of embedding codes, respectively.}
\end{figure}
\begin{figure}[!htb]
	\setlength{\abovecaptionskip}{-0.05cm}
	\setlength{\belowcaptionskip}{-0.1cm}
	\centering
	\subfigure[Original MNIST samples (left) and the corresponding reconstructed version with perturbed private components (right)]{
		\begin{minipage}[b]{0.45\textwidth}
			\includegraphics[width=1\textwidth]{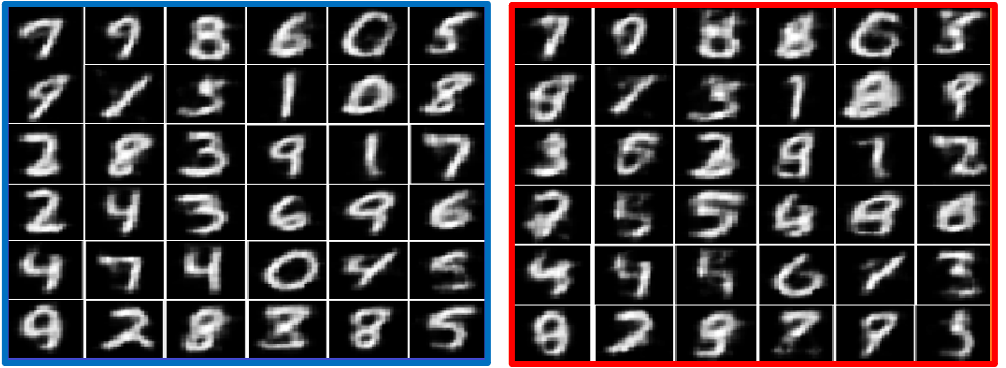}
		\end{minipage}
		\label{fig:dtf}
	}
    	\subfigure[Original face samples (blue) and the corresponding reconstructed version (red) with replacing and perturbed private components extracted from reference samples (green)]{
    		\begin{minipage}[b]{0.45\textwidth}
   		 	\includegraphics[width=1\textwidth]{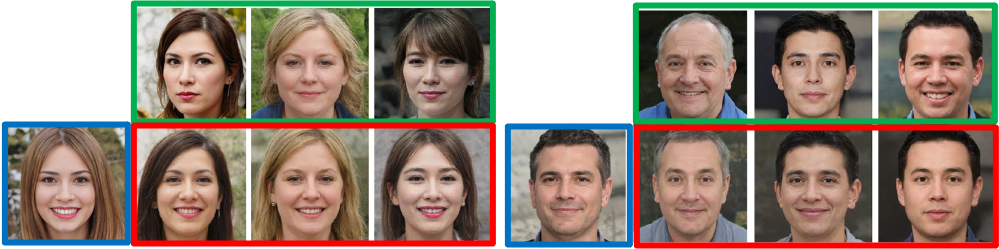}
    		\end{minipage}
		\label{fig:invq}
    	}
	\caption{Reconstruction demonstration with both public components and perturbed private components via DVQ-AE.}
	\label{fig:dt}
\end{figure}
\begin{figure}[!htb]
	\centering
	\setlength{\abovecaptionskip}{-0.05cm}
	\setlength{\belowcaptionskip}{-0.1cm}
	\includegraphics[width=3.5in,height=3.4in]{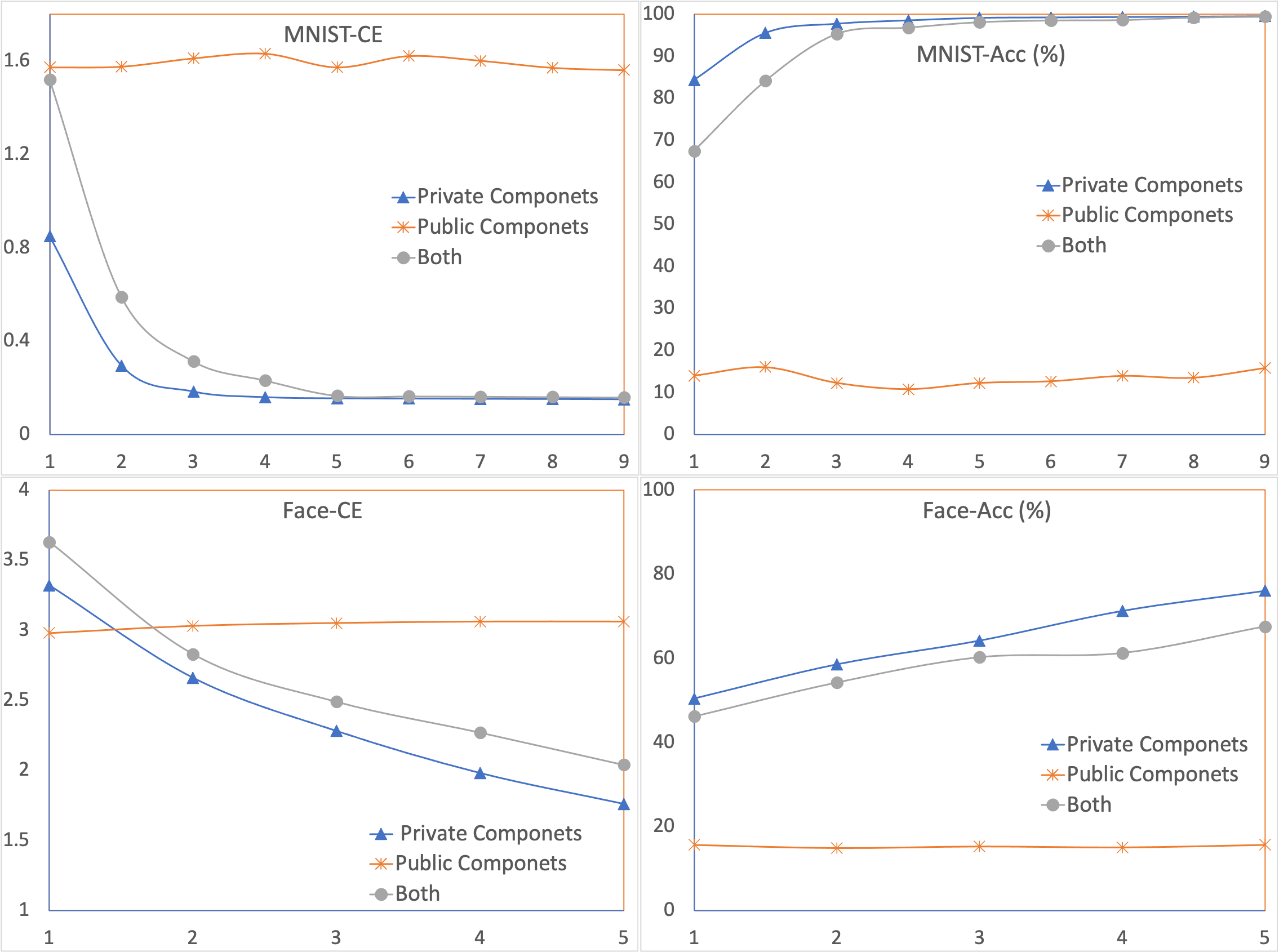}
	\caption{Evaluation on the privacy metrics.}
	\label{fig:ceacc}
\end{figure}

\subsection{Evaluation on Disentanglement}
\label{sec:evad}
Taking the Speech recognition scenario as an example, we test the privacy component's recognition accuracy using speaker voice excluded from the training dataset, as the degree of disentanglement of OCTOPUS. The results are summarized in Figure \ref{fig:distangle} and Table \ref{tab:disent}. 
The test accuracy of speaker identification is reduced by half on the codebook size for OCTOPUS, compared to the same scheme without our disentanglement strategies. This demonstrates that the latent codes would cause privacy leakage, e.g., speaker information. Namely, the fewer codes in the codebook, the less the privacy component is incorporated. 
The recognition accuracy increases, as the codebook size is enlarged from $32$ to $512$. However, the re-identification accuracy remains at low probability, even with the $B_{512}$ codebook size. 
We find a trade-off between the disentanglement and utility of codes for downstream tasks. 
If we compress the codebook into only a few codes, the disentanglement ability is strong while being hard to contain sufficient features of content information. In our experience, $B_{256}$ is a proper size of codebook to balance the trade-off. 
The experiments demonstrate that the OCTOPUS scheme can achieve good disentanglement performance without explicitly imposing any objectives or constraints on the encoder or decoder.

\begin{figure}[!htb]
	\centering
	\setlength{\abovecaptionskip}{-0.05cm}
	\setlength{\belowcaptionskip}{-0.1cm}
	\includegraphics[width=3.5in]{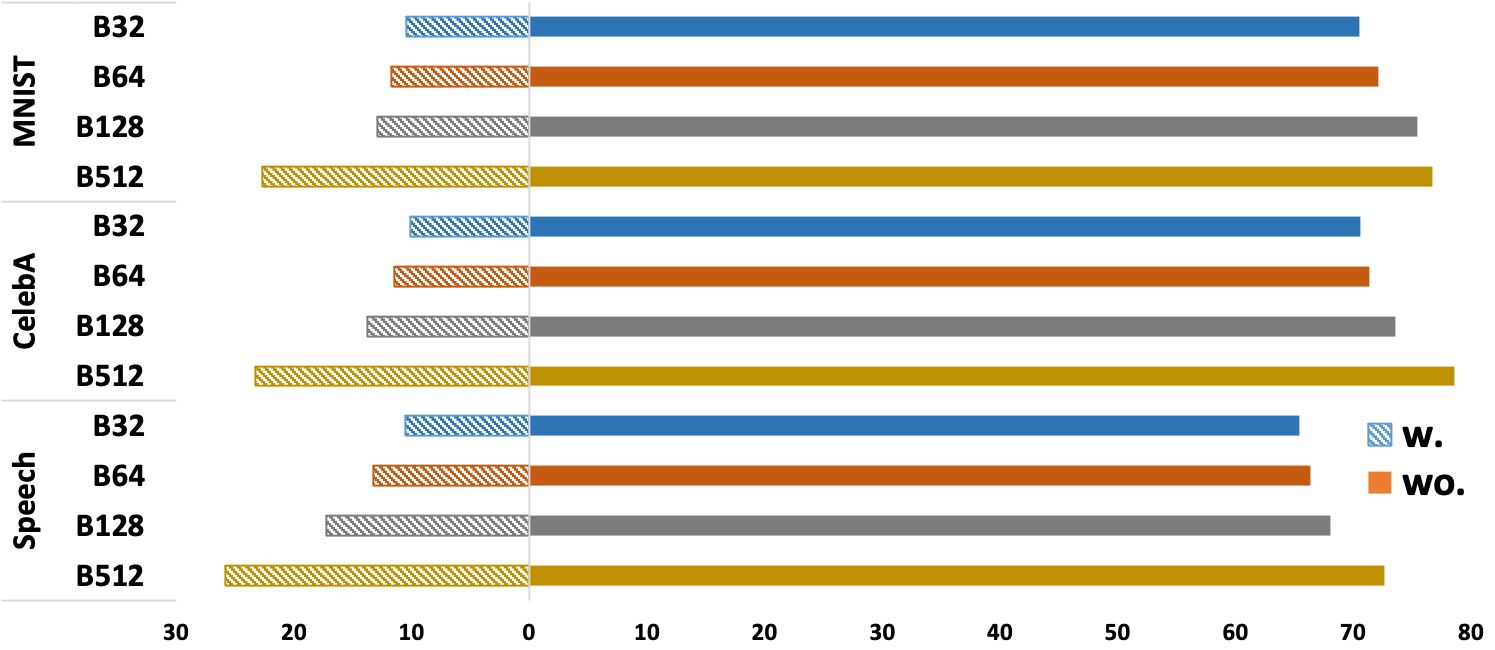}
	\caption{Accuracy of identifying private component on various size of embedding codes w/wo disentanglement strategy.}
\label{fig:distangle}
\end{figure}
\begin{table}[!htb]
\centering
	\setlength{\abovecaptionskip}{-0.05cm}
	\setlength{\belowcaptionskip}{-0.1cm}
\caption{Accuracy of identifying private component on various size of embedding codes with and without disentanglement strategy.}
\label{tab:my-table}
\scriptsize
\begin{tabular}{c|llll}
\hline
w/wo Disent. & \multicolumn{1}{c}{$B_{32}$} & \multicolumn{1}{c}{$B_{64}$} & \multicolumn{1}{c}{$B_{128}$} & \multicolumn{1}{c}{$B_{512}$} \\ \hline
MNIST  & 10.43/70.56 & 11.71/72.12 & 12.88/75.45 & 22.63/76.72 \\
CelebA & 10.11/70.65 & 11.42/71.41 & 13.73/73.64 & 23.22/78.62 \\
Speech & 10.54/65.42  & 13.22/66.36  & 17.21/68.11 & 25.82/72.64 \\ \hline
\end{tabular}
\label{tab:disent}
\end{table}
\subsection{Evaluation on Other Performance Improvements}
\label{sec:evaother}
The improvements in communication, computing, and storage performance are intuitive. For example, DVQ-AE can compress the high-dimensional raw face image with $128 \times 128 \times 3$ input size to at most $32 \times 32$ latent codes for communication, computing, and storage, achieving comparable accuracy performance as using the raw data. In the speech, the latent space is 64 times smaller than the original waveform input. In addition, we evaluated the normalized amount of time required for training the downstream models for MNIST, CelebA, and Speech datasets. We observed training time decreases across different compression sizes and models. Such reduction may be caused by the reduction of the amount of the input data and smaller number of parameters to be learned when applying the OCTOPUS algorithm with a suitable compression size. 
Detailed evaluations are provided in the following sections.

\subsection{Evaluation of the Multi-tasks on the latent codes}
\label{sec:evacdoe}
As there are clear annotations on CelebA dataset, we select 20 annotations to demonstrate the performance of the Octopus on multiple downstream tasks, e.g., binary attribute classification with respect to Hairline, Eyeglasses, Male, Mouth, Smiling, Hair Style, etc. 
For Octopus, as the collected latent codes can be considered as the output derived from a generalized feature extractor, we apply a simple linear classifier to conduct multi-task classification on the latent codes with a 512 size. 
Specifically, the input of the linear classifier is the collected latent codes, followed by three linear classifier layers are (512, 512), (512, 128), and (128, 40) sequentially, and the Sigmoid activation function, with around 0.3M parameters. 

\textcolor{black}{
The baseline approaches are three competitive deep neural network approaches, i.e., LNet+ANet, LNet+ANet with pre-training, and the MobileNets model. 
LNets + ANet \cite{liu2015deep} applies two DNNs to recognize faces and then detect the facial attributes, where both LNeto and LNets have network structures similar to AlexNet (10 convolutional layers), and ANet has four convolutional layers and one fully-connected layer, with around 10M parameters. 
MobileNet \cite{howard2017mobilenets} consists of 28 depth-wise convolutional layers and point-wise convolutional layers with around 4M parameters. 
To demonstrate the performance of Octopus, we allow the baseline models to be trained on the centralized raw data and select the best performance of the binary classification models. The dataset is divided with the ratio of 7:2:1 for training, validation, and test. All these models are trained with a learning rate of 0.001 and for 50 epochs. 
The results are reported in Figure \ref{fig:multi}, the performance of Octopus is equivalent to or better than the other three competitive models while using far less computation and parameters. This increased accuracy is due to the fact that our autoencoder functions as an enhanced feature extractor and the Vector-Quantized procedure also reduces noise in the data. 
}

\begin{figure}[!htb]
	\centering
	\setlength{\abovecaptionskip}{-0.05cm}
	\setlength{\belowcaptionskip}{-0.1cm}
	\includegraphics[width=3.5in]{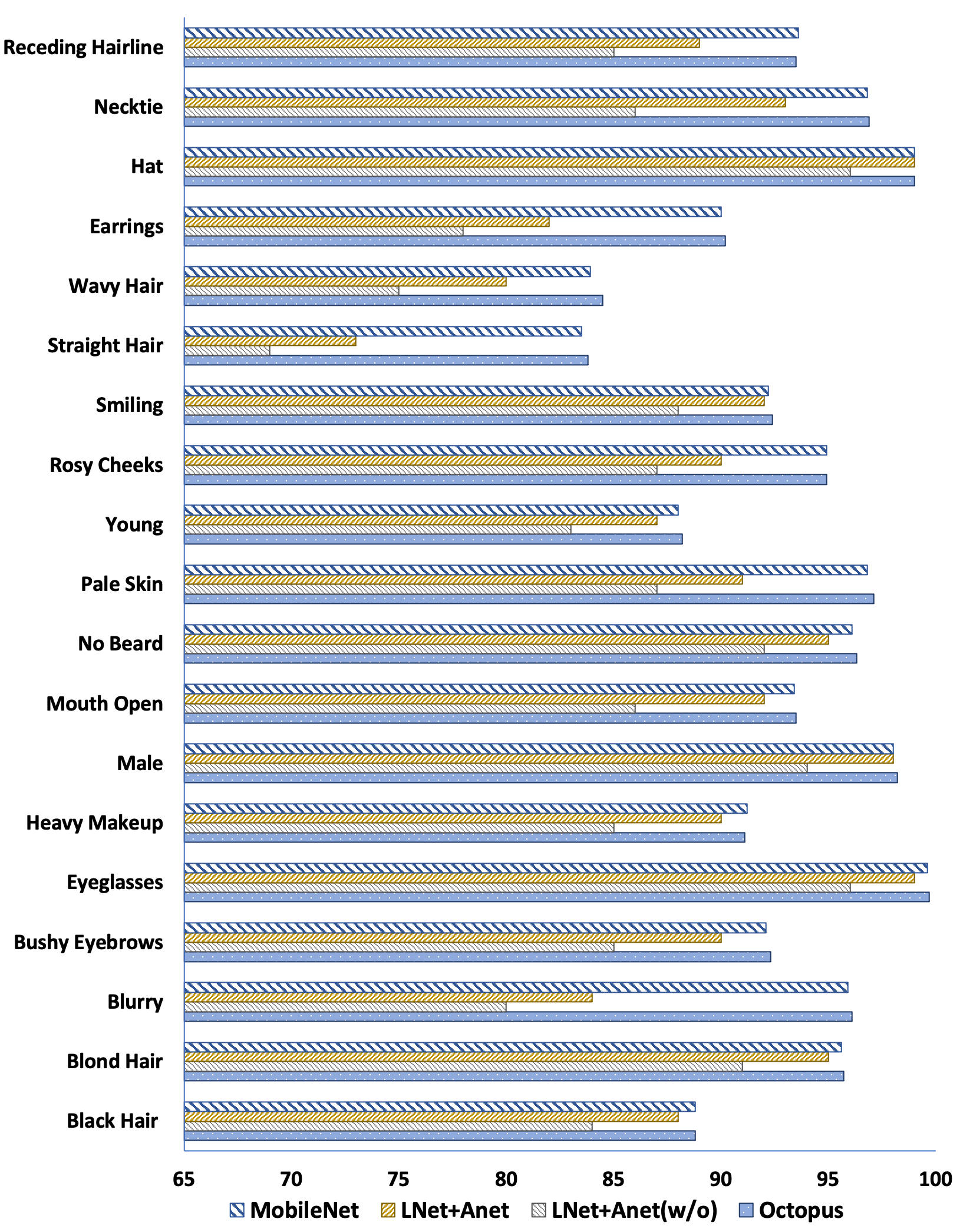}
	\caption{Accuracy of multi-tasks evaluation.}
	\label{fig:multi}
\end{figure}
\subsection{Performance under varying settings}
\label{sec:evasetting}
When the fraction of data used for training the autoencoder is increased, the performance of the autoencoder will increase. For example, the performance of the model increases by around 12-15\% when increasing the fraction of ATD from 10\% to 20\% on the image datasets, with respect to Inception Score, accounting for 90\% performance of the model trained on the entire training data (ATD is 100\%). One promising solution to loosen the assumption on the amount of publicly available data to train the autoencoder is data augmentation, such as Differentiable Augmentation (DiffAugment) \cite{zhao2020differentiable}, where the performance of the model trained using only 100 images for augmentation could match that trained on the entire dataset. One of our next step studies is to leverage data augmentation and transfer learning to address the assumption on publicly available data.  

\subsection{Evaluation on time overheads}
\label{sec:evatime}
To further illustrate the performance of Octopus, we have evaluated the time of latent code inference, downstream task training, and once-off autoencoder training.  
Taking the centralized CelebA dataset (128*128 size) as an example, the training time for a VGG-16 based classifier of gender from scratch will take more than 1 hour using one GPU with three epochs, which could be considered as the best performance limit for federated learning as well. Training time grows approximately linearly with the number of downstream tasks. However, with the same computing resources, the training time of the classifier with three linear classifier layers on the centralized latent codes is only within 5 min toward similar accuracy.  
Although the auto-encoder training takes hours, it is a once-off procedure on the server-side for multiple downstream tasks that could be carried out in parallel across several GPUs or by using the pre-trained model for transfer learning to reduce the training time further.
Given the once-off trained autoencoder, the time for extracting and disentangling latent code of each input sample is less than 0.3s for MNIST and CelebA (0.15s and 0.29s, respectively) for the CPU-Only (i5 8200 8GB) end-user. With the distributed pre-trained feature extractor, the training of multiple downstream tasks can be considered transfer learning, resulting in reduced training time and lower generalization error for the neural network model.

\section{Related Work}
\subsection{Federated learning and performance}
Vast amounts of data are necessary to deliver high-quality machine learning models \cite{vaswani2019fast}.
The conventional framework in machine learning considers a centralized dataset concocted in the integrated scheme. However, real-world data is usually decentralized across multiple parties. 
Various research works have been conducted in the last few years to apply Federated Learning (FL) to train machine learning models collaboratively while keeping the data decentralized  \cite{vogels2019powersgd,konevcny2016federateda,konevcny2016federatedb}.

Generally, a federated learning scheme has fundamentally four principal operations in an iterative manner: 
\begin{itemize}
\item Sampling Clients. Desired participants are selected by the server from a pool of clients;
\item Broadcasting Parameters. The parameters of the initialized commonly-shared model are broadcasted from the server to each selected clients;
\item Updating Local Parameters: Each participant retrains the broadcasting model in a parallel manner using the local data;
\item Aggregating Global Model: The server will aggregate the gathered uploaded parameters from each client, resulting in the updated global model. 
\end{itemize}

Being a distributed system, federated learning also encounters several challenges in terms of communication, system and data heterogeneity, as well as privatization and security. Several FL architectures are developed to address either the accuracy and computational costs or other challenges in the FL field \cite{li2020federated,aledhari2020federated,mothukuri2021survey,yang2019federated,mammen2021federated}. 

\textcolor{black}{
\textbf{Communication performance.} Even if the models are commonly less expensive to be transmitted than raw data gathering in centralized learning mechanisms, the communication burden is one of the main bottlenecks for federated learning. 
The gradient updates, frequently communicated between nodes in distributed and federated learning, are an inevitable burden, resulting in expensive communication, and scalability constraints  \cite{lin2017deep,zhang2017zipml}. Existing solutions proposed to address the communication overheads are either data compression \cite{konevcny2016federated} or by restricting only the relevant outputs by the clients \cite{hsieh2017gaia,luping2019cmfl} to be uploaded to the parameter server. 
}

\textcolor{black}{
\textbf{Heterogeneity performance.} The heterogeneity of the data and system in the network, as well as the non-identical nature of the distributed data, impact the performance of the federated learning approaches \cite{mcmahan2017communication,li2020federated}. Due to lack of global data, there is spatial heterogeneity such that each node may have a data value bias and data size variation with respect to the general population, and temporal heterogeneity such that the distribution of each local dataset may vary with time. 
FedAvg is proposed as a solution to address the heterogeneity, however, it is still not robust enough to deal with system heterogeneity. Enhancements on the model aggregation approaches have also been explored to solve the heterogeneity \cite{bonawitz2019towards,liu2020privacy}. 
}

\textcolor{black}{
\textbf{Privacy and security performance.} Even though the raw user data does not leave the local clients in FL, communicating model updates can also reveal sensitive information, requiring additional privacy-preserving strategies, such as secure computation, differential privacy schemes, or running in a trusted execution environment. 
One common privacy-preserving methodology is based on differential privacy to provide a certified privacy guarantee. The differentially private federated learning scheme \cite{geyer2017differentially,zeng2021differentially} protects a user's privacy by adding noise to the clipped model parameters before model aggregation at the cost of model accuracy. Besides, it is time- and resource-consuming to conduct perturbation or encryption for a huge amount of parameters with limited resources. Moreover, they all fail when the server is distrusted. 
There are also alternative privacy-preserving approaches to communicate gradients without incurring accuracy loss, such as secure aggregation \cite{bonawitz2017practical}. 
Secure aggregation protocols only require the averages of the local model weights to be computed from subsets of participants to compute SGD and global model updates.  
SMC-based techniques only work with honest participants. There is no guarantee that the protocol will be available or correct for Byzantine participants, particularly if they collude with a malicious server to reveal the inputs of a targeted client \cite{truong2021privacy}. 
}

\textcolor{black}{
Lack of access to global training data can cause security concerns, such as data poisoning or backdoor attacks. As the federated learning scheme is a distributed system, it might be even harder to recognize the misbehaving devices. 
The attacker can poison the data either by directly inserting poisoned data to the targeted device or injecting poisoned data through other devices \cite{sun2020data}. The adversaries could influence either the prediction of a subset of classes or the global model accuracy \cite{tolpegin2020data}. To protect the confidentiality of the training data, the aggregator has no visibility into how these updates are generated. A model-poisoning attack is proposed as a more powerful attack than poisoning attacks that target only the training data \cite{bagdasaryan2020backdoor}. 
As secure aggregation prevents the server from examining individual models, it is hard to detect the poisoning model \cite{truong2021privacy}. 
In contrast, Octopus's global data collection methods are capable of handling such erratic and malicious input via prepossessing from a global perspective. 
}

\subsection{Highlight the difference}
Even though some variants of federated learning, such as federated averaging, could perform more local computation and less communication, three significant challenges need to be considered to promote federated learning being widely adopted by the industry, i.e., heterogeneity, communication efficiency, and privacy concerns.
Due to the data heterogeneity, how FedAvg aggregates device models generates a bias in the final model, resulting in obvious accuracy degradation. 
Even sampling SGD in parallel on a small subset of devices, the total communication scale is still large. 
Consequently, different extra optimization mechanisms are required to address the communication complexities, data or system heterogeneity, and privacy concerns, resulting in further computation or accuracy degradation.
In contrast, Octopus addresses the heterogeneity, communication efficiency, and privacy challenges into one single task with less cost.
We highlight the differences between Octopus and existing approaches concerning these three challenges in the following parts.
\subsubsection{Octopus v.s. Existing propositions for heterogeneity}
Generally, FL faces heterogeneity challenges such as data heterogeneity derived from the non-IID distribution of data and personalization of models specific to customization. With sufficiently large private datasets and non-IID data distribution, local models perform better than commonly trained global models, and then clients have no incentive to participate in federated learning. 

\textcolor{black}{
The open question regarding model personalization is to define the circumstances under which commonly shared global models could be useful for local data. As a result of a lack of validation data, most existing studies can only assess the performance of the global model based on aggregate data instead of measuring its performance with respect to individual clients. To determine the performance of the common model, it may be necessary to perform local training several times. 
In addition, personalized models may not achieve the same performance as local models, particularly in the presence of differential privacy or robust aggregation.
Octopus could address these issues since the local data representations are available for global model training and validation, and no extra privacy-preserving mechanisms are required. 
To address non-IID data in federated learning, existing proposals include building a globally integrated database consisting of a small selection of data from each client \cite{zhao2018federated}, attaching a proximal constraint to the local optimization to mitigate the effects of variable local updates \cite{li2018federated}, or modifying the global optimization cost-function or weighting the aggregation at the server to deal with dissimilarities among data distributions \cite{laguel2020device,yeganeh2020inverse}. 
However, besides the extra cost of computation and accuracy degradation, these solutions also have challenges regarding communication costs, security, and privacy \cite{kairouz2019advances,jamali2021federated}, and the effect on large-scale real-world data has never been fully explored.  
Further, clustering is also used to handle the heterogeneity and personalization, conducted after the common model has been converged as a post-processing technique, enabling clients to develop more customized models. 
However, all clients are needed to engage in each round with a centralized clustering procedure, which is computationally infeasible, especially for bandwidth-limited communication channels. By examining individual clients' complete and clean weight updates, introducing a clustering process may improve performance of FL in some non-IID settings. However, it is also necessary to implement further privacy-preserving strategies.  Such noise has a negative impact on obtaining good clustering of clients and also causes degradation of the performance. 
}

\textcolor{black}{
As Octopus can be considered the global data collection and the model is trained on the collected data representations at the server-side, there is no concern about data heterogeneity from the non-IID distribution.  
}

\textcolor{black}{
Moreover, splitting data among devices in the FL makes it impossible to estimate the validation performance of the model right away, since we only have access to a few devices at a time. Therefore, decisions like which models to flag for early stopping could be noisy and not take into account all of the available validation signals. 
Due to the fact that data is separated across devices in the FL, and only a small number of devices are available at a time, it is difficult to measure the trained model's validation performance directly. Thus, making a decision regarding an early stop is not possible. Octopus, in contrast, can fully incorporate all the available validation signals for the model training. 
In addition, these approaches aiming at the heterogeneity often ignore the privacy concerns, introducing extra cost and performance degradation after adopting privacy-preserving mechanisms.
}

\subsubsection{Octopus v.s. Gradient Quantization and Sparsification for Communication Efficiency}
\textcolor{black}{
Generally, gradient quantization and sparsification are two commonly adopted approaches to address the communication efficiency of FL \cite{sun2020adaptive,li2021talk,xu2020ternary,lin2017deep,kairouz2019advances,sattler2019robust}. 
In spite of the fact that quantization incurs rounding errors, existing research studies have demonstrated that this approach results in good model convergence even at the expense of a greater number of communication rounds \cite{lin2017deep,sun2020adaptive}. The learning rate, for example, should be lower than expected when the weights are quantized to address the hard-to-converge problem. Additionally, deeper compression will often result in greater distortion of the gradients and will require more communication rounds to achieve convergence, or may even cause the model to fail to achieve convergence. As a result, additional complex procedures are necessary to ensure the gradient descent converges using quantized gradients, which are also not well generalized to certain models such as ResNet101. 
Octopus has no impact on the model's training, and it is model-agnostic. 
}

\textcolor{black}{
It is generally considered that sparsification-based approaches to communication compression compress the upstream updates, since the sparsity variations in client-side updates will always differ. 
The downstream update will not even be compressed when it comes to federated learning, in which the amount of participants is often larger than the inverse sparsity rate. 
In contrast, the communication data of Octopus is the discrete codes, which is much smaller than the compressed gradients, involving only the upstream. The downstream is a one-time download of the trained model instead of the massive iterations of gradients between server and clients. 
In addition, it remains unclear whether any quantization or sparsification method, or any combination of these, will provide the optimal trade-off between communication and accuracy in federated learning \cite{kairouz2019advances}. Research studies have shown that the more sparse or the number of clients increases, the greater the degradation of accuracy. 
Quantization-based methods include a significant loss of accuracy for some complex models, such as BERT, which is not acceptable by industry standards. 
The use of Octopus, however, has no negative influence on the accuracy of the model, but may even enhance it due to the global collection of data. 
Furthermore, existing compressed strategies always require identical compressors across all participants without consideration of heterogeneity, resulting in less flexibility. Octopus makes no assumption on the heterogeneity and exhibits more flexibility.
}

\textcolor{black}{
Besides the information loss due to the quantization or sparsification, the unpropagated gradient may contain sensitive information, resulting in privacy concerns. Therefore, extra privacy-preserving mechanisms are necessary at the cost of further computation or communication. 
For these improvements toward communication and heterogeneity, Secure Multi-Computation or Differential Privacy are also required to provide privacy protection capability in federated learning, resulting in a trade-off between cost and efficiency. Clients are required to encrypt the parameters of the models or add noise to the data before communicating with the server, inevitably leading to additional computational and accuracy degradation. 
Octopus combines communication and privacy into a single task, with no extra cost to handle the privacy. 
}

\subsubsection{Octopus v.s. Split Learning}
Besides the data partitioning and communication compression, split learning \cite{gupta2018distributed,vepakomma2018split} is proposed to decompose the deep neural network W into two portions: client-side network $W_c$ and server-side network $W_s$. The similarity between Octopus and split learning is that both enable collaborative machine learning without needing the communication of raw data with an external server and the exchange of full model parameters. 
During the common split learning procedure, each client calculates the forward pass through the network up to a cut layer. Next, the outputs of the cut layer, i.e., smashed data, are sent to an external server, which finishes the training of the remaining layers. Gradients generated from the result are then propagated back to the cut layer on the server, and then sent back to the clients for local training \cite{9060868}. 

\textcolor{black}{
There are three main advantages of Octopus compared to split learning.  
First, the communication size of the split learning is still larger than Octopus. The size of smashed data is huge for the large deep neural networks, e.g., the size of a convolutional layer is $Filter size \times Filter size \times Channel size \times Filter~number$. Besides, the communication is bidirectional, where the forward propagation and backpropagation processes continue until the network becomes trained with all the available clients and reaches its convergence. For Octopus, the communication is single directional with only 5 to 10 bits for uploading. 
In addition, the network training in the split learning scenario is done by a sequence of distributed training processes. Overall, the training is done in a relay-based manner, where the server trains with one client and then moves to another client sequentially, leading to the clients' resources being idle. This synchronization issue significantly raises the training overhead when there are several clients, not even for continuously retraining the model under time-varying data distributions and environments. For the Octopus,  multiple models could be trained in parallel as a one-time event after collecting a certain number of data, and only finetuning is required when new data comes in. 
Further, the communicated smashed data can generally reveal information about the underlying raw data. The extra cost of computations is inevitable to reduce the potential leakage, e.g., restricting the distance correlation \cite{vepakomma2018supervised} between communicated smashed data and the raw data or by applying extra privacy-preserving mechanisms. 
In contrast, Octopus applies the disentanglement strategies for data privatization, which is a preprocessing procedure during encoder training. There is no extra cost of computations required to handle information leakage during communication. 
Another concern faced by FL is vulnerability. The challenge is to identify/block adversarial clients from transmitting malicious data in order to poison the model or alter the model once it has been received by modifying its gradient/parameters before sending it back to the central server for aggregation. Octopus has no these concerns, as the model training is conducted at the server-side, and it is feasible to develop anomaly detection methods again the malicious behaviors leveraging the global data.
}

\textcolor{black}{
Consequently, federated learning mechanisms may not be practical in some real-world implementations due to the expensive communication, especially when multiple downstream tasks (e.g., classifiers) should be learned from dynamically updated and non-iid distributed data sources while providing local privatization. 
To bridge the gap between practice and efficiency, we explore an alternative distributed learning scheme to address the limitations of both federated learning and centralized learning, while enhancing their advantages. To the best of our knowledge, this paper is the first study to address communication overhead via latent compression, leveraging global data while providing local privatization of local data without additional cost due to encryption or perturbation.
}

\section{Conclusion and Discussion}
We introduce OCTOPUS as a new family of distributed computing frameworks for collecting and processing distributed data while optimizing communication aspects, computing aspects, and storage performance in one single solution. 
We demonstrate that OCTOPUS can encode and transmit high-dimensional data via compressed latent codes, thus privatizing local data without additional encryption or perturbation, while taking advantage of global data with fewer storage and computation resources. 
Our experiments demonstrate that OCTOPUS achieves likelihoods that are almost as good as centralized learning, but with the added advantages of privatization and efficient communication. 
Our study provides evidence that OCTOPUS is an efficient distributed learning scheme that can successfully address communication, computing, and storage performance, as well as the privatization of local data under one single task. In this manner, small devices such as mobile phones and home assistants (for example, Google home mini) can be used to run a lightweight learning algorithm to privatize data under various settings on the fly.

\textcolor{black}{
In this work, we consider the case of single-modal privacy, where private information is derived from the single-modal input. For example, the speaker information is derived from the waveform of the voice. Additionally, there are multimodal privacy cases in which privacy can be released when the data is in multimodal forms, and it is not easy to distinguish these multimodal public and private components. An example of speaker information is not only the waveform of the voice but also the transcript of the voice recording. As we apply attribute grouping as prepossessing to learn the disentangled strategies, the problem of distinguishing public/private components is transferred to multimodal grouping, which would be one of our future works. 
Additionally, we will explore the use of other types of auto-encoder to extract latent variables and the use of knowledge distillation to help mitigate the reduction in the model's accuracy. As a result, we focus on single-domain data at this point and will move to multi-domain data (ImageNet) in the future, where the bottleneck is data scale and computation sources, rather than method.
In addition, we consider incorporating existing communication efficiency strategies into Octopus in the future. 
}

\textcolor{black}{
With Octopus, the primary benefit is providing distributed feature encoding with disentanglement to address heterogeneity and communication efficiency, while exploring the possibility of data privatization control at no additional cost. 
For data privacy, we focus on providing local users with a level of control by separating the customized private component from the non-private component prior to sharing data, rather than releasing raw information (or its derived equivalent) with perturbation or encryption.  
Furthermore, such privatization control is directly associated with strategies designed to address expensive communication and limitations related to the heterogeneity of distributed data sources and inaccessibility to global data, with no additional privacy-preserving mechanisms incorporated. 
\newline
We conducted privacy analyses in Section 2.7 Privacy Analysis and Computational Adversary. The encoder provides a non-linear transformation as the first privacy-preserving technique. We then discussed how the IN and VQ procedures could be implemented as the second filtering-based privacy-preserving technique. 
Furthermore, we demonstrated the possibility that Octopus can also perform low-cost generation toward anonymity-based privacy-preserving techniques in Section 3.3. As an example, K anonymous samples could be generated by replacing the private component with K times of random noise. 
\newline
Our experiments demonstrated that OCTOPUS provides a higher level of data privatization control from two perspectives. 
(i) Recognition accuracy of sensitive information. Classification accuracy for the private component via OCTOPUS is reduced from $>85\%$ to $<10\%$, while the filtered data can still serve the desired classification tasks at the cost of only a small utility accuracy (4\% drop). 
(ii) We evaluated the privacy leakage of both private and public components in terms of the conditional entropy in bits against a strong computational adversary. 
We have demonstrated that the proposed OCTOPUS with no additional privacy-preserving mechanism performs as well as or better than the FL with differential privacy protection in terms of privatization control. In addition, if additional privacy-preserving approaches are applied to the latent codes, more robust protection can be achieved. 
Another advantage of OCTOPUS is the reduction of computing costs associated with perturbation or encryption of compressed latent codes rather than raw data or high-density equivalents such as gradients. We will explore this direction in future work. 
}
\bibliographystyle{IEEEtran}
{
\bibliography{egbib}
\renewcommand{\baselinestretch}{0.5}
}

\appendices
\section{Architectures and Hyperparameters}
The encoder architecture consists of Conv2D layers and Relu activation (Conv1D for Speech), followed by a BatchNormalization layer and a few Resnet-Blocks to prevent gradient vanishing. 
The public component representation is produced by the instance normalization (IN) and VQ layers that find the nearest embedding from a learned codebook for the input. 
The sensitive component representation is the average difference between continuous and discrete vectors in each batch of the same group.

\textcolor{black}{
For the models used for speech data evaluation, the speaker classifier is composed of three Conv1D layers with 256 hidden nodes followed by a fully connected layer, and the word error rate (WER) is evaluated by using the IBM Watson Cloud Speech-to-Text API. 
For the image dataset, the classifier for MNIST uses the LeNet (3 convolutional layers, 2 subsampling layers and 2 fully connected layers). For the CelebA data, the classifier is adopted the VGG-16 \cite{simonyan2014very}. Besides, a linear classifier is used for the classification of latent codes, consisting of three linear classifier layers sequentially and the Sigmoid activation function.
}
\begin{table}[!htb]
\scriptsize
\centering
	\setlength{\abovecaptionskip}{-0.05cm}
	\setlength{\belowcaptionskip}{-0.1cm}
\caption{Default Hyperparameter Settings}
\begin{tabular}{ll}
\hline
Parameter         & Default Value (MNIST, Face, Speech)      \\ \hline
Input size        & 32,128,120                               \\
Latent size       & 40,128,256                               \\
$\beta_1,\beta_2$ & 0.9, 0.999                               \\
Batch size        & 100                                      \\
Codebook size     & (k=10, d=64), (k=64, d=64),(k=256, d=64) \\
Training steps    & 20, 100000,25000                         \\
Learning rate     & 0.001, ADAM                              \\
$\lambda$         & 0.01                                     \\ \hline
\end{tabular}
\end{table}



%





\begin{IEEEbiographynophoto}{Shuo Wang}
\scriptsize
Dr. Shuo Wang is a Research Scientist at the CSIRO's Data61 and Cybersecurity CRC. He completed his Ph.D. at the University of Melbourne. His main research interests include: 
Trust AI: robustness and reliability of deep neural networks;
Computer security and privacy in systems, networking, and databases.
\end{IEEEbiographynophoto}
\begin{IEEEbiographynophoto}{Surya Nepal}
\scriptsize
Dr. Surya Nepal is a group leader and Senior Principal Research Scientist working on trust and security aspects of Web Services at CSIRO's Data61. 
\end{IEEEbiographynophoto}
\begin{IEEEbiographynophoto}{Kristen Moore}%
\scriptsize
Dr. Kristen Moore is a Senior Research Scientist at Data61 and the Cyber Security CRC, working on cyber deception. 
\end{IEEEbiographynophoto}
\begin{IEEEbiographynophoto}{Marthie Grobler}
\scriptsize
Dr. Marthie Grobler currently holds a position as Principal Research Scientist at CSIRO's Data61 where she leads the human-centric cybersecurity team.
\end{IEEEbiographynophoto}
\begin{IEEEbiographynophoto}{Carsten Rudolph}
\scriptsize
Carsten Rudolph is an Associate Professor of the Faculty of IT at Monash University, Australia, and Director of the Oceania Cyber Security Centre OCSC. 
\end{IEEEbiographynophoto}
\begin{IEEEbiographynophoto}{Alsharif Abuadbba}
\scriptsize
Dr. Alsharif Abuadbba is Senior Research Scientist at CSIRO's Data61, Australia. 
He has joined Data61 Distributed System Security group early 2019 as a Research Scientist and Cybersecurity CRC fellow.
\end{IEEEbiographynophoto}

\end{document}